\newif\ificml
\DeclareMathOperator*{\argmax}{\arg\hspace{-0.07em}\max}
\DeclareMathOperator*{\argmin}{\arg\hspace{-0.07em}\min}
\newcommand{\mf}[1]{\mathbf{#1}}
\newtheorem{thm}{Theorem}
\newtheorem{lem}{Lemma}
\newtheorem{defi}{Definition}
    \icmltitlerunning{ASAM: Adaptive Sharpness-Aware Minimization for Scale-Invariant Learning of Deep Neural Networks}
\begin{document}

\ificml
    \twocolumn[
    \icmltitle{ASAM: Adaptive Sharpness-Aware Minimization\\ for Scale-Invariant Learning of Deep Neural Networks}

    \icmlsetsymbol{equal}{*}

    \begin{icmlauthorlist}
    \icmlauthor{Jungmin Kwon}{sr}
    \icmlauthor{Jeongseop Kim}{sr}
    \icmlauthor{Hyunseo Park}{sr}
    \icmlauthor{In Kwon Choi}{sr}
    \end{icmlauthorlist}

    \icmlaffiliation{sr}{Samsung Research, Seoul, Republic of Korea}

    \icmlcorrespondingauthor{Jeongseop Kim}{jisean.kim@samsung.com}

    \icmlkeywords{Machine Learning, ICML}

    \vskip 0.3in
    ]
    \printAffiliationsAndNotice{}  
\else 
    \title{ASAM: Adaptive Sharpness-Aware Minimization\\ for Scale-Invariant Learning of Deep Neural Networks}
    \author{
      Jungmin Kwon$^*$
      %\thanks{Samsung Research, Seoul, Republic of Korea, \texttt{jungmin.kwon@samsung.com}}\\
      \and
      Jeongseop Kim\thanks{Samsung Research, Seoul, Republic of Korea, Correspondence to: Jeongseop Kim $<jisean.kim@samsung.com>$.} \\
      \and
      Hyunseo Park$^*$
      %\thanks{Samsung Research, Seoul, Republic of Korea, \texttt{hyunseo.park@samsung.com}} \\
      \and
      In Kwon Choi$^*$
      %\thanks{Samsung Research, Seoul, Republic of Korea, \texttt{ik21.choi@samsung.com}}
    }
    \date{}
    \maketitle
\fi

\ificml
    \newcommand\figureratio{0.85}
    \newcommand\beginfigure{\begin{figure}[!t]}
    \newcommand\jmkendfigure{\end{figure}}
    \newcommand\begintable{\begin{table}[!h]}
    \newcommand\jmkendtable{\end{table}}
\else
    \newcommand\figureratio{0.75}
    \newcommand\beginfigure{\begin{figure}}
    \newcommand\jmkendfigure{\end{figure}}
    \newcommand\begintable{\begin{table}}
    \newcommand\jmkendtable{\end{table}}
\fi

\begin{abstract}

Recently, learning algorithms motivated from sharpness of loss surface as an effective measure of generalization gap have shown state-of-the-art performances. Nevertheless, sharpness defined in a rigid region with a fixed radius, has a drawback in sensitivity to parameter re-scaling which leaves the loss unaffected, leading to weakening of the connection between sharpness and generalization gap. In this paper, we introduce the concept of adaptive sharpness which is scale-invariant and propose the corresponding generalization bound. We suggest a novel learning method, adaptive sharpness-aware minimization (ASAM), utilizing the proposed generalization bound. Experimental results in various benchmark datasets show that ASAM contributes to significant improvement of model generalization performance.

\end{abstract}

\section{Introduction}
\label{introduction}

Generalization of deep neural networks has recently been studied with great importance to address the shortfalls of pure optimization, yielding models with no guarantee on generalization ability. 
To understand the generalization phenomenon of neural networks, many studies have attempted to clarify the relationship between the geometry of the loss surface and the generalization performance \citep{hochreiter1995simplifying, mcallester1999pac, keskar2017large, neyshabur2017exploring, jiang2019fantastic}. 
Among many proposed measures used to derive generalization bounds, loss surface sharpness and 
minimization of the derived generalization bound have proven to be effective in attaining state-of-the-art performances in various tasks \citep{hochreiter1997flat, mobahi2016training, chaudhari2019entropy, DBLP:journals/corr/abs-2006-05620, yue2020salr}.

Especially, Sharpness-Aware Minimization (SAM) \citep{foret2021sharpnessaware} as a learning algorithm based on PAC-Bayesian generalization bound, achieves a state-of-the-art generalization performance for various image classification tasks benefiting from minimizing sharpness of loss landscape, which is correlated with generalization gap. Also, they suggest a new sharpness calculation strategy, which is computationally efficient, since it requires only a single gradient ascent step in contrast to other complex generalization measures such as sample-based or Hessian-based approach. 

However, even sharpness-based learning methods including SAM and some of sharpness measures suffer from sensitivity to model parameter re-scaling. \citet{dinh2017sharp} point out that parameter re-scaling which does not change loss functions can cause a difference in sharpness values so this property may weaken correlation between sharpness and generalization gap. We call this phenomenon scale-dependency problem. 

To remedy the scale-dependency problem of sharpness, many studies have been conducted recently \citep{liang2019fisher, yi2019positively, karakida2019normalization, tsuzuku2020normalized}. However, those previous works are limited to proposing only generalization measures which do not suffer from the scale-dependency problem and do not provide sufficient investigation on combining learning algorithm with the measures. 

To this end, we introduce the concept of normalization operator which is not affected by any scaling operator that does not change the loss function.
The operator varies depending on the way of normalizing, e.g., element-wise and filter-wise.
We then define \textit{adaptive sharpness} of the loss function, sharpness whose maximization region is determined by the normalization operator.
We prove that adaptive sharpness remains the same under parameter re-scaling, i.e., \textit{scale-invariant}.
Due to the scale-invariant property, adaptive sharpness shows stronger correlation with generalization gap than sharpness does.

Motivated by the connection between generalization metrics and loss minimization, we propose a novel learning method, adaptive sharpness-aware minimization (ASAM), which adaptively adjusts maximization regions thus acting uniformly under parameter re-scaling. ASAM minimizes the corresponding generalization bound using adaptive sharpness to generalize on unseen data, avoiding the scale-dependency issue SAM suffers from.

The main contributions of this paper are summarized as follows:

\begin{itemize}
    \item We introduce adaptive sharpness of loss surface which is invariant to parameter re-scaling. 
    In terms of rank statistics, adaptive sharpness shows stronger correlation with generalization than sharpness does, which means that adaptive sharpness is more effective measure of generalization gap.
    \item We propose a new learning algorithm using adaptive sharpness which helps alleviate the side-effect in training procedure caused by scale-dependency by adjusting their maximization region with respect to weight scale. 
    \item We empirically show its consistent improvement of generalization performance on image classification and machine translation tasks using various neural network architectures.
\end{itemize}

The rest of this paper is organized as follows. Section~\ref{sec:p} briefly describes previous sharpness-based learning algorithm. In Section~\ref{sec:as}, we introduce adaptive sharpness which is a scale-invariant measure of generalization gap after scale-dependent property of sharpness is explained. In Section~\ref{sec:a}, ASAM algorithm is introduced in detail using the definition of adaptive sharpness.
In Section~\ref{sec:e}, we evaluate the generalization performance of ASAM for various models and datasets.
We provide the conclusion and future work in Section~\ref{sec:c}.

\section{Preliminary} \label{sec:p}
Let us consider a model $f:X\rightarrow Y$ parametrized by a weight vector $\mf{w}$ and a loss function $l:Y\times Y\rightarrow \mathbb{R}_+$.
Given a sample $S=\{(\mf{x}_1,\mf{y}_1),\ldots,(\mf{x}_n,\mf{y}_n)\}$ drawn from data distribution $D$ with i.i.d condition, the training loss can be defined as $L_S(\mf{w})=\sum_{i=1}^n l(\mf{y}_i,f(\mf{x}_i;\mf{w}))/n$.
Then, the generalization gap between the expected loss $L_D(\mf{w})=\mathbb{E}_{(\mf{x},\mf{y})\sim D}[l(\mf{y},f(\mf{x};\mf{w}))]$ and the training loss $L_S(\mf{w})$ represents the ability of the model to generalize on unseen data.

Sharpness-Aware Minimization (SAM) \citep{foret2021sharpnessaware} aims to minimize the following PAC-Bayesian generalization error upper bound

\begin{equation}\label{eq:p1}
    L_{D}(\mf{w}) \leq \max_{\Vert \bm{\epsilon}\Vert _{p} \leq \rho } L_{S}(\mf{w}+\bm{\epsilon}) + h\left(\frac{\Vert \mf{w}\Vert _{2}^2}{\rho^2}\right)
\end{equation}
for some strictly increasing function $h$.
The domain of max operator, called maximization region, is an $\ell^p$ ball with radius $\rho$ for $p \geq 1$.
Here, sharpness of the loss function $L$ is defined as
\begin{equation}\label{eq:s1}
    \max_{\Vert \bm{\epsilon} \Vert_{2} \leq \rho}L_S(\mf{w}+\bm{\epsilon}) - L_S(\mf{w}).
\end{equation}
Because of the monotonicity of $h$ in Equation~\ref{eq:p1}, it can be substituted by $\ell^2$ weight decaying regularizer, so the sharpness-aware minimization problem can be defined as the following minimax optimization
\[
    \min_{\mf{w}}\max_{\Vert \bm{\epsilon}\Vert _{p} \leq \rho }L_{S}(\mf{w} + \bm{\epsilon}) + \frac{\lambda}{2} \Vert \mf{w}\Vert _2^2
\]
\noindent where $\lambda$ is a weight decay coefficient.

SAM solves the minimax problem by iteratively applying the following two-step procedure for $t=0,1,2,\ldots$ as
\begin{equation} \label{sam}
  \left\{
    \begin{array}{ll}
      \bm{\epsilon}_{t} = \rho \frac {\displaystyle \nabla L_S(\mf{w}_t)} {\displaystyle \Vert \nabla L_S(\mf{w}_t) \Vert_2} \\
      \mf{w}_{t+1} = \mf{w}_t - \alpha_t \left( \nabla L_S(\mf{w}_t + \bm{\epsilon}_{t}) + \lambda \mf{w}_t \right)
    \end{array}
  \right.
\end{equation}
where $\alpha_t$ is an appropriately scheduled learning rate. This procedure can be obtained by a first order approximation of $L_S$ and dual norm formulation as
\begin{align*}
    \bm{\epsilon}_{t} 
    & = \argmax_{\Vert \bm{\epsilon}\Vert _{p} \leq \rho } L_{S}(\mf{w}_t+\bm{\epsilon}) \\
    & \approx \argmax_{\Vert \bm{\epsilon}\Vert _{p} \leq \rho } \bm{\epsilon}^\top \nabla L_S(\mf{w}_t) \\
    & = \rho \operatorname{sign}(\nabla L_S(\mf{w}_t)) \frac {\displaystyle \vert \nabla L_S(\mf{w}_t) \vert^{q-1}} {\displaystyle \Vert \nabla L_S(\mf{w}_t) \Vert^{q-1}_q}
\end{align*}
and
\begin{align*}
    \mf{w}_{t+1} 
    & = \argmin_\mf{w} L_{S}(\mf{w}+\bm{\epsilon}_{t}) + \frac{\lambda}{2} \Vert\mf{w}\Vert^2_2 \\
    & \approx \argmin_\mf{w} \, (\mf{w}-\mf{w}_t)^\top \nabla L_S(\mf{w}_t + \bm{\epsilon}_{t}) + \frac{\lambda}{2} \Vert\mf{w}\Vert^2_2 \\
    & \approx \mf{w}_t - \alpha_t (\nabla L_S(\mf{w}_t + \bm{\epsilon}_{t}) + \lambda \mf{w}_t)
\end{align*}
where $1/p+1/q=1$ and $\vert \cdot \vert$ denotes element-wise absolute value function, and $\operatorname{sign}(\cdot)$ also denotes element-wise signum function.
It is experimentally confirmed that the above two-step procedure produces the best performance when $p=2$, which results in Equation~\ref{sam}.

\noindent As can be seen from Equation~\ref{sam}, SAM estimates the point $\mf{w}_t + \bm{\epsilon}_{t}$ at which the loss is approximately maximized around $\mf{w}_t$ in a rigid region with a fixed radius by performing gradient ascent, and performs gradient descent at $\mf{w}_t$ using the gradient at the maximum point $\mf{w}_t + \bm{\epsilon}_{t}$.

\section{Adaptive Sharpness: Scale-Invariant Measure of Generalization Gap} \label{sec:as}

In \citet{foret2021sharpnessaware}, it is experimentally confirmed that the sharpness defined in Equation~\ref{eq:s1} is strongly correlated with the generalization gap. Also they show that SAM helps to find minima which show lower sharpness than other learning strategies and contributes to effectively lowering generalization error.

However, \citet{dinh2017sharp} show that sharpness defined in the rigid spherical region with a fixed radius can have a weak correlation with the generalization gap due to non-identifiability of rectifier neural networks, whose parameters can be freely re-scaled without affecting its output. 

If we assume that $A$ is a scaling operator on the weight space that does not change the loss function, as shown in Figure~\ref{fig:sphere}, the interval of the loss contours around $A\mf{w}$ becomes narrower than that around $\mf{w}$ but the size of the region remains the same, i.e.,
\[\max_{\Vert \bm{\epsilon} \Vert_{2} \leq \rho} L_S(\mf{w}+\bm{\epsilon}) \neq \max_{\Vert \bm{\epsilon} \Vert_{2} \leq \rho} L_S( A\mf{w}+\bm{\epsilon}).\]

Thus, neural networks with $\mf{w}$ and $A\mf{w}$ can have arbitrarily different values of sharpness defined in Equation~\ref{eq:s1}, although they have the same generalization gaps. This property of sharpness is a main cause of weak correlation between generalization gap and sharpness and we call this scale-dependency in this paper.

To solve the scale-dependency of sharpness, we introduce the concept of adaptive sharpness. Prior to explaining adaptive sharpness, we first define normalization operator. The normalization operator that cancels out the effect of $A$ can be defined as follows.

\begin{defi}[Normalization operator]\label{norm_op}
	Let $\{T_\mf{w}, \mf{w}\in\mathbb{R}^k\}$ be a family of invertible linear operators on $\mathbb{R}^k$.
	Given a weight $\mf{w}$, if $T_{A\mf{w}}^{-1}A=T_\mf{w}^{-1}$ for any invertible scaling operator $A$ on $\mathbb{R}^k$ which does not change the loss function, we say $T_\mf{w}^{-1}$ is a normalization operator of $\mf{w}$.
\end{defi}

Using the normalization operator, we define adaptive sharpness as follows.
\begin{defi}[Adaptive sharpness]
If $T_\mf{w}^{-1}$ is the normalization operator of $\mf{w}$ in Definition \ref{norm_op}, adaptive sharpness of $\mf{w}$ is defined by
\begin{equation}\label{eq:s4}
 \max_{\Vert T^{-1}_\mf{w} \bm{\epsilon}\Vert _{p} \leq \rho } L_{S}(\mf{w}+\bm{\epsilon})- L_S(\mf{w})
\end{equation}
where $1 \leq p \leq \infty$.
\end{defi}

Adaptive sharpness in Equation~\ref{eq:s4} has the following properties.

\beginfigure
\centering    
\subfigure[$\Vert \bm{\epsilon} \Vert_2 \leq \rho$ and $\Vert \bm{\epsilon}' \Vert_2 \leq \rho$ \citep{foret2021sharpnessaware} \label{fig:sphere}]{\includegraphics[width=\figureratio\linewidth]{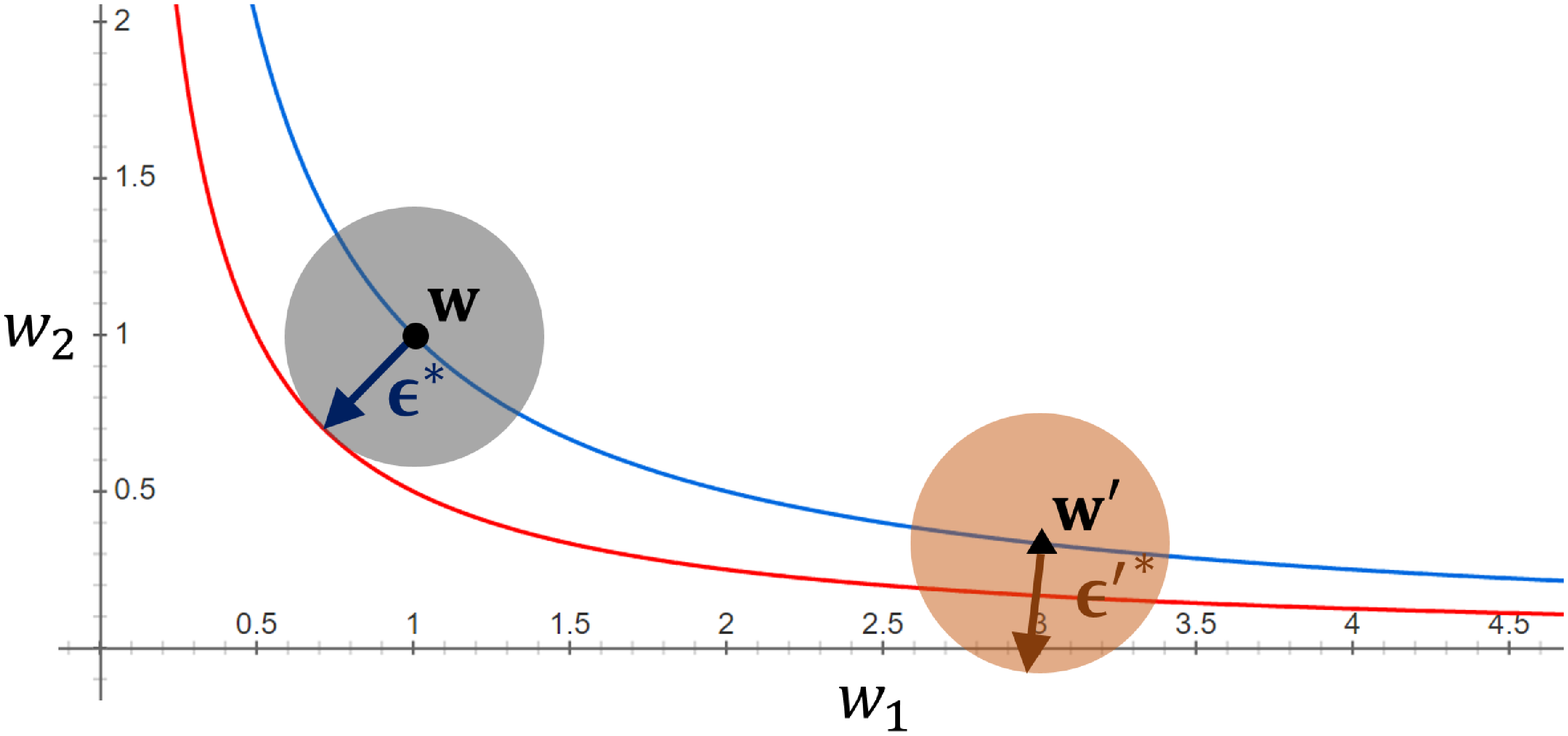}}
\subfigure[$\Vert T^{-1}_\mf{w} \bm{\epsilon} \Vert_\infty \leq \rho$ and $\Vert T^{-1}_{\mf{w}'} \bm{\epsilon}' \Vert_\infty \leq \rho$ \citep{keskar2017large}\label{fig:cuboid}]{\includegraphics[width=\figureratio\linewidth]{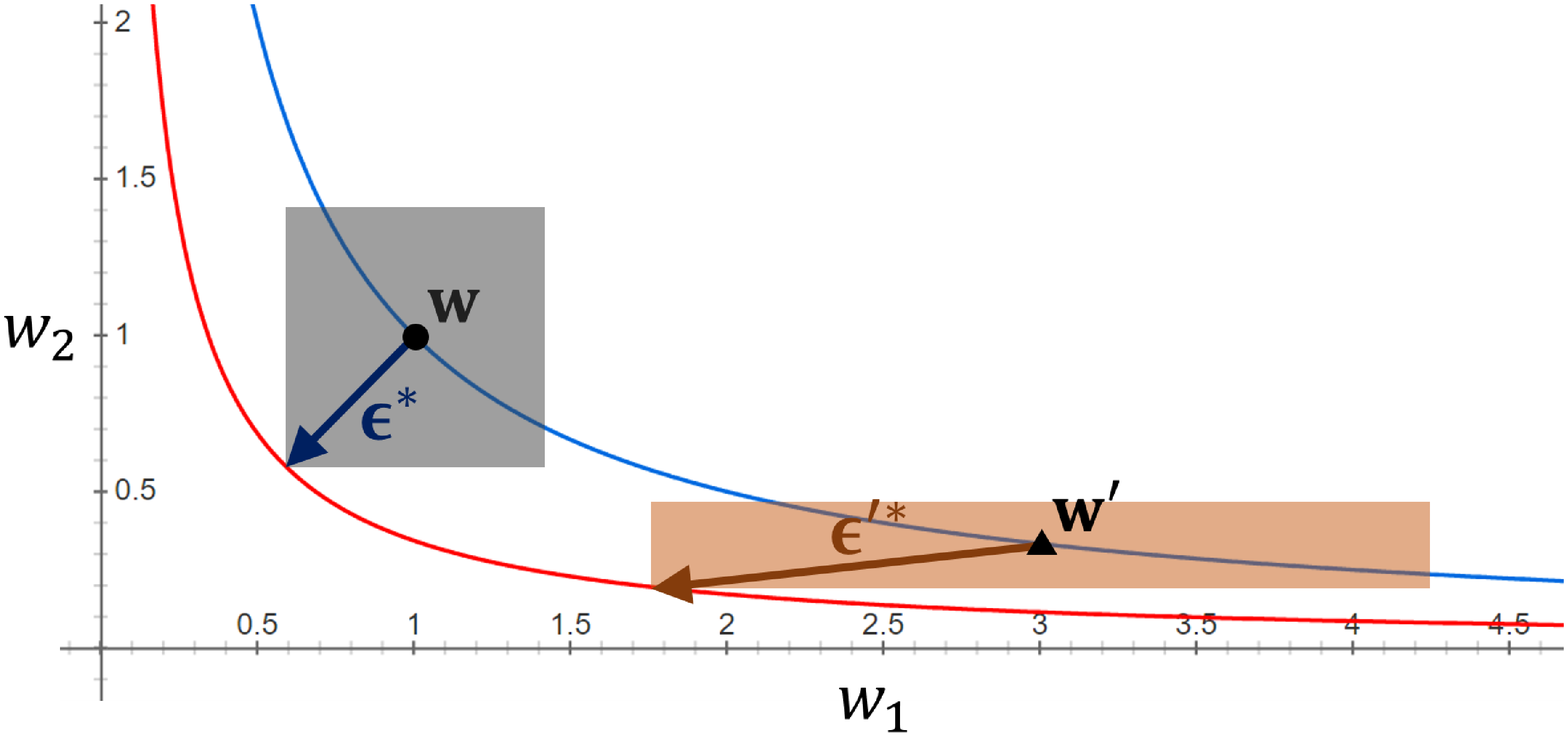}}
\subfigure[$\Vert T^{-1}_\mf{w} \bm{\epsilon} \Vert_2 \leq \rho $ and $\Vert T^{-1}_{\mf{w}'} \bm{\epsilon}' \Vert_2 \leq \rho$ (In this paper)\label{fig:ellipsoid}]{\includegraphics[width=\figureratio\linewidth]{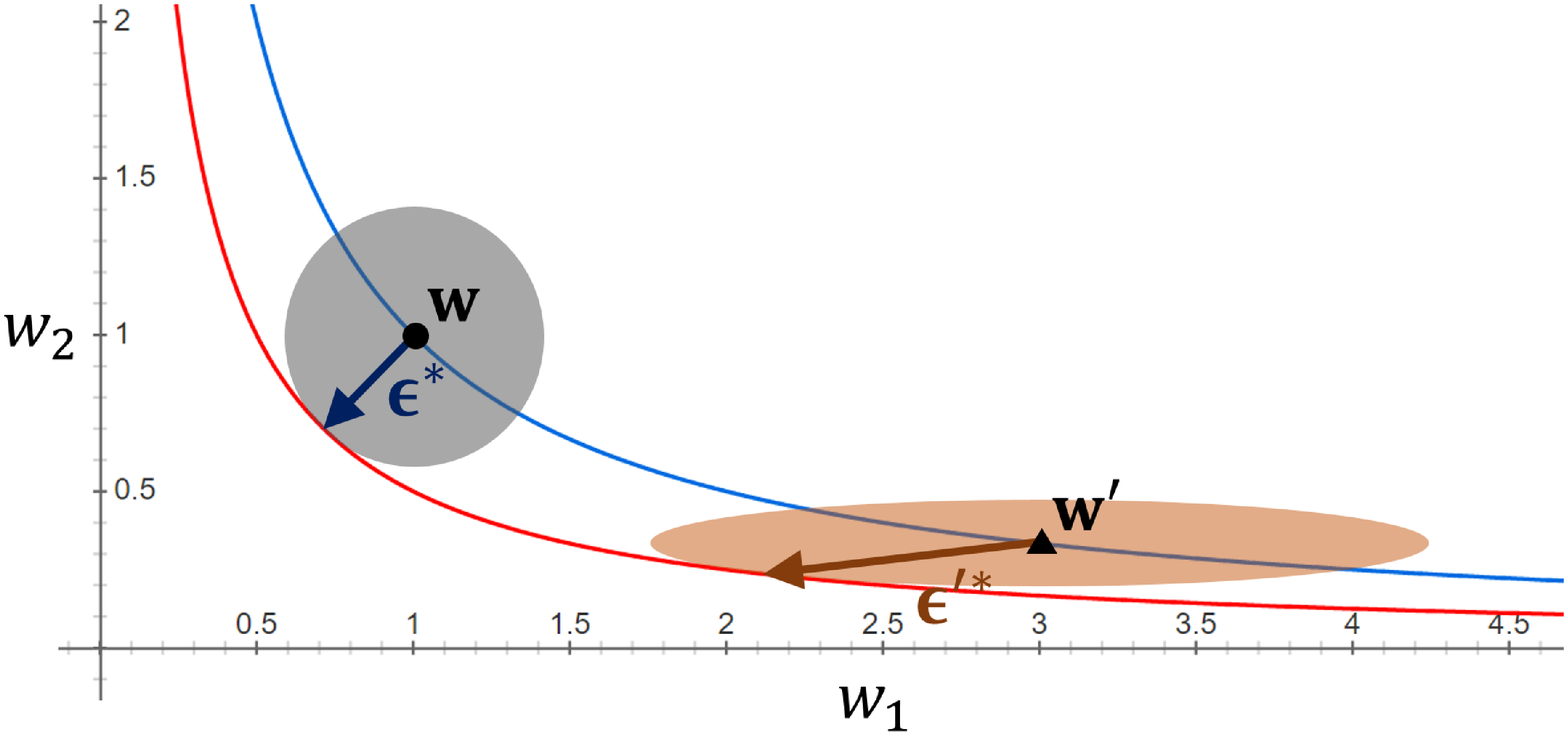}}
\caption{Loss contours and three types of maximization regions: (a) sphere, (b) cuboid and (c) ellipsoid. $\mf{w}=(1,1)$ and $\mf{w}'=(3, 1/3)$ are parameter points before and after multiplying a scaling operator $A=\mathrm{diag}(3, 1/3)$ and are expressed as dots and triangles, respectively. The blue contour line has the same loss at $\mf{w}$, and the red contour line has a loss equal to the maximum value of the loss in each type of region centered on $\mf{w}$. $\bm{\epsilon}^*$ and $\bm{\epsilon'}^*$ are the $\bm{\epsilon}$ and $\bm{\epsilon'}$ which maximize the loss perturbed from $\mf{w}$ and $\mf{w}'$, respectively.}
\label{fig1}
\jmkendfigure

\begin{thm} \label{thm1}
	For any invertible scaling operator $A$ which does not change the loss function, values of adaptive sharpness at $\mf{w}$ and $A\mf{w}$ are the same as
	\begin{align*}
	&\max_{\Vert T_\mf{w}^{-1}\bm{\epsilon} \Vert_p \leq \rho} L_S(\mf{w}+\bm{\epsilon}) - L_S(\mf{w})\\
	&= \max_{\Vert T_{A\mf{w}}^{-1}\bm{\epsilon} \Vert_p \leq \rho} L_S(A \mf{w}+\bm{\epsilon}) - L_S(A \mf{w})
	\end{align*}
	where $T_\mf{w}^{-1}$ and $T_{A\mf{w}}^{-1}$ are the normalization operators of $\mf{w}$ and $A\mf{w}$ in Definition \ref{norm_op}, respectively. 
\end{thm}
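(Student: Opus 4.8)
The plan is to reduce the right-hand side to the left-hand side by a single change of variables in the perturbation, using the defining identity $T_{A\mf{w}}^{-1}A = T_\mf{w}^{-1}$ of the normalization operator together with the hypothesis that $A$ leaves the loss unchanged, interpreted as the functional identity $L_S(A\mf{v}) = L_S(\mf{v})$ for every weight $\mf{v}$.

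First I would dispose of the constant terms: applying scale-invariance of the loss with $\mf{v} = \mf{w}$ gives $L_S(A\mf{w}) = L_S(\mf{w})$, so it suffices to show that the two maxima coincide. Next, in the right-hand maximum I would substitute $\bm{\epsilon} = A^{-1}\bm{\epsilon}'$, equivalently $\bm{\epsilon}' = A\bm{\epsilon}$. Since $A$ is invertible this is a bijection of $\mathbb{R}^k$, so it maps the feasible set $\{\bm{\epsilon}' : \Vert T_{A\mf{w}}^{-1}\bm{\epsilon}'\Vert_p \leq \rho\}$ onto $\{\bm{\epsilon} : \Vert T_{A\mf{w}}^{-1}A\bm{\epsilon}\Vert_p \leq \rho\}$, which by Definition~\ref{norm_op} equals $\{\bm{\epsilon} : \Vert T_\mf{w}^{-1}\bm{\epsilon}\Vert_p \leq \rho\}$ — exactly the feasible set appearing on the left. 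For the objective, scale-invariance applied at $\mf{v} = \mf{w} + \bm{\epsilon}$ yields $L_S(A\mf{w} + \bm{\epsilon}') = L_S(A(\mf{w} + \bm{\epsilon})) = L_S(\mf{w} + \bm{\epsilon})$. A bijective change of variable preserves the value of a maximum over the image set, so the right-hand maximum equals $\max_{\Vert T_\mf{w}^{-1}\bm{\epsilon}\Vert_p \leq \rho} L_S(\mf{w} + \bm{\epsilon})$; combining with $L_S(A\mf{w}) = L_S(\mf{w})$ completes the argument.

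The step I would be most careful about — really the only subtlety — is the scope of the phrase ``scaling operator $A$ that does not change the loss function'': the proof uses $L_S\circ A = L_S$ as an identity of functions, so that it can be invoked at the perturbed point $\mf{w}+\bm{\epsilon}$ and not merely at $\mf{w}$, and it uses that the family $\{T_\mf{w}\}$ satisfies the compatibility relation for this same $A$. Both are precisely what Definition~\ref{norm_op} encodes, and in particular no regularity of $L_S$ (continuity, differentiability) is needed, unlike in the first-order approximations used later to derive the ASAM update. A secondary remark worth including is that the identity is exact — it holds for suprema whether or not they are attained — because the transformation of the constraint sets is an exact bijection rather than an approximation.
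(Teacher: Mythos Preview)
Your proposal is correct and follows essentially the same route as the paper: reduce to equality of the two maxima via $L_S(A\mf{w})=L_S(\mf{w})$, then apply the bijective change of variable $\bm{\epsilon}' = A\bm{\epsilon}$ together with the identity $T_{A\mf{w}}^{-1}A = T_\mf{w}^{-1}$ to match both the constraint set and the objective. Your explicit remarks on the functional scope of ``$A$ does not change the loss'' and on exactness for suprema are welcome clarifications but do not alter the argument.
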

\begin{proof}
	From the assumption, it suffices to show that the first terms of both sides are equal.
	By the definition of the normalization operator, we have $T_{A\mf{w}}^{-1}A=T_\mf{w}^{-1}$.
	Therefore,
	\begin{align*}
	\max_{\Vert T_{A\mf{w}}^{-1}\bm{\epsilon} \Vert_p \leq \rho} L_S(A \mf{w}+\bm{\epsilon}) &=
	\max_{\Vert T_{A\mf{w}}^{-1}\bm{\epsilon} \Vert_p \leq \rho} L_S(\mf{w}+A^{-1}\bm{\epsilon})\\
	&= \max_{\Vert T_{A\mf{w}}^{-1}A\bm{\epsilon}' \Vert_p \leq \rho} L_S(\mf{w}+\bm{\epsilon}')\\
	&= \max_{\Vert T_\mf{w}^{-1}\bm{\epsilon}' \Vert_p \leq \rho} L_S(\mf{w}+\bm{\epsilon}')
	\end{align*}
	where $\bm{\epsilon}'=A^{-1}\bm\epsilon$.
\end{proof}

By Theorem~\ref{thm1}, adaptive sharpness defined in Equation \ref{eq:s4} is \textit{scale-invariant} as with training loss and generalization loss. 
This property makes the correlation of adaptive sharpness with the generalization gap stronger than that of sharpness in Equation~\ref{eq:s1}.

Figure~\ref{fig:cuboid} and \ref{fig:ellipsoid} show how a re-scaled weight vector can have the same adaptive sharpness value as that of the original weight vector. It can be observed that the boundary line of each region centered on $\mf{w}'$ is in contact with the red line. This implies that the maximum loss within each region centered on $\mf{w}'$ is maintained when $\Vert T^{-1}_{\mf{w}'} \bm{\epsilon}' \Vert_p \leq \rho $ is used for the maximization region. Thus, in this example, it can be seen that adaptive sharpness in \ref{fig:cuboid} and \ref{fig:ellipsoid} has scale-invariant property in contrast to sharpness of the spherical region shown in Figure~\ref{fig:sphere}.

The question that can be asked here is what kind of operators $T_\mf{w}$ can be considered as normalization operators which satisfy $T_{A\mf{w}}^{-1}A=T_\mf{w}^{-1}$ for any $A$ which does not change the loss function. One of the conditions for the scaling operator $A$ that does not change the loss function is that it should be node-wise scaling, which corresponds to row-wise or column-wise scaling in fully-connected layers and channel-wise scaling in convolutional layers. The effect of such node-wise scaling can be canceled using the inverses of the following operators:
\begin{itemize}
	\item element-wise
	\[T_\mf{w} = \mathrm{diag}(|w_1|, \ldots, |w_k|)\]
	where
    \[\mf{w} = [w_1, w_2, \ldots, w_k],\]
	%\item node-wise
	\item filter-wise
	\begin{align}\label{filterwise}
	\nonumber T_\mf{w} = \mathrm{diag}(\mathrm{concat}(&\Vert \mf{f}_1\Vert _2 \mf{1}_{n(\mf{f}_1)}, \ldots, \Vert \mf{f}_m\Vert _2\mf{1}_{n(\mf{f}_m)},\\  &|w_1|, \ldots, |w_q|))
	\end{align}
	where
    \[\mf{w} = \mathrm{concat}(\mf{f}_1, \mf{f}_2, \ldots, \mf{f}_m, w_1, w_2, \ldots, w_q).\]
\end{itemize}

Here, $\mf{f}_i$ is the $i$-th flattened weight vector of a convolution filter and $w_j$ is the $j$-th weight parameter which is not included in any filters. And $m$ is the number of filters and $q$ is the number of other weight parameters in the model. If there is no convolutional layer in a model (i.e., $m=0$), then $q=k$ and both normalization operators are identical to each other.
Note that we use $T_\mf{w} + \eta I_k$ rather than $T_\mf{w}$ for sufficiently small $\eta > 0$ for stability.
$\eta$ is a hyper-parameter controlling trade-off between adaptivity and stability. 

\beginfigure
\centering
\ificml
\subfigure[Sharpness ($p=2$), \newline \hspace{1cm}  $\tau=0.174$.]{\includegraphics[width=0.48\linewidth]{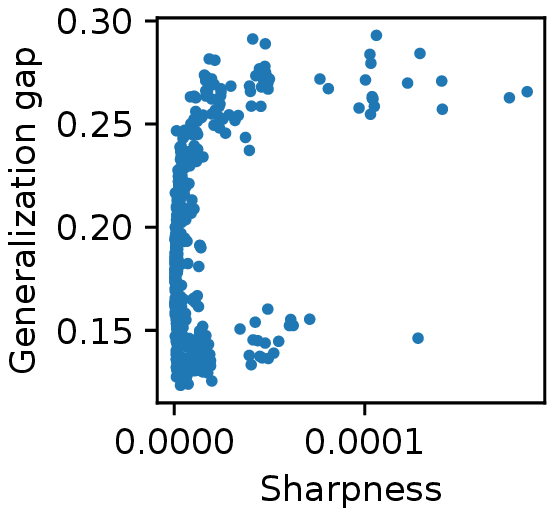}}
\subfigure[Adaptive sharpness ($p=2$), \newline $\tau=0.636$.]{\includegraphics[width=0.48\linewidth]{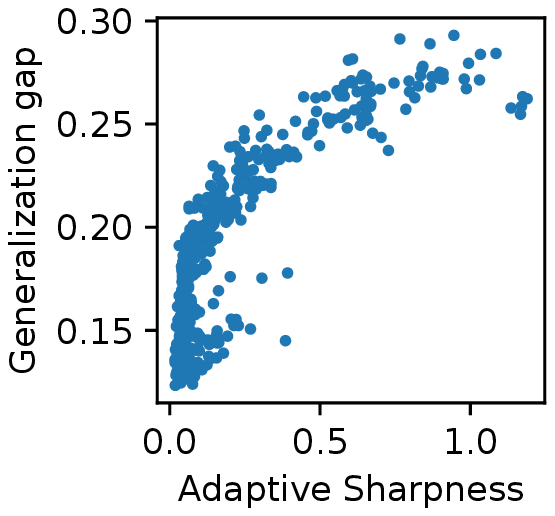}}
\subfigure[Sharpness ($p=\infty$), \newline $\tau=0.257$.]{\includegraphics[width=0.48\linewidth]{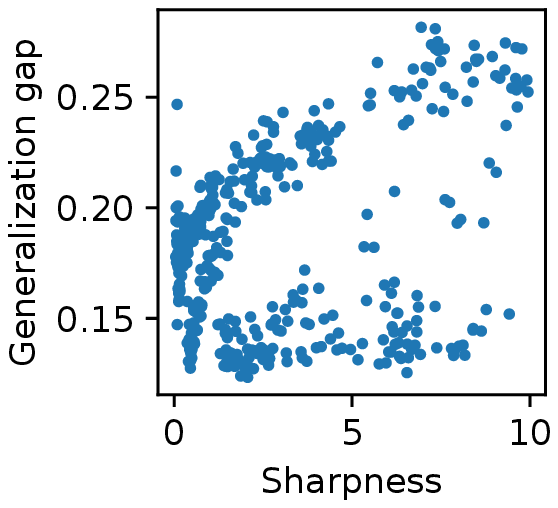}}
\subfigure[Adaptive sharpness \newline($p=\infty$), $\tau=0.616$.]{\includegraphics[width=0.48\linewidth]{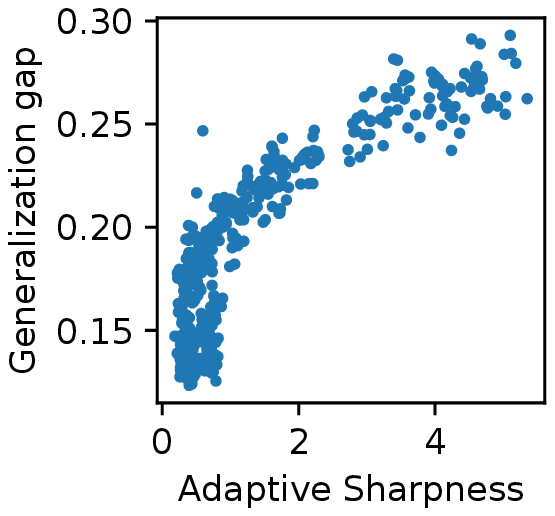}}
\else
\subfigure[Sharpness ($p=2$), $\tau=0.174$.]{\includegraphics[width=0.48\linewidth]{scatter_1.eps}}
\subfigure[Adaptive sharpness ($p=2$), $\tau=0.636$.]{\includegraphics[width=0.48\linewidth]{scatter_2.eps}}
\subfigure[Sharpness ($p=\infty$), $\tau=0.257$.]{\includegraphics[width=0.48\linewidth]{scatter_3.eps}}
\subfigure[Adaptive sharpness ($p=\infty$), $\tau=0.616$.]{\includegraphics[width=0.48\linewidth]{scatter_4.eps}}
\fi
\caption{Scatter plots which show correlation of sharpness and adaptive sharpness with respect to generalization gap and their rank correlation coefficients $\tau$.}\label{scatter}
\jmkendfigure

\begin{table}[t]
\ificml
\else
    \centering
    \captionsetup{justification=centering}
\fi
\setlength\tabcolsep{3.5pt}
\caption{Rank statistics for sharpness and adaptive sharpness. \label{rank}}
%\vskip 0.1in
\begin{center}
\begin{small}
\begin{tabular}{lcccc}
\toprule
\multirow{2}{*}{} & \multicolumn{2}{c}{$p=2$} & \multicolumn{2}{c}{$p=\infty$}\\
\cmidrule{2-5}
& \multirow{1}{*}{sharpness} & \makecell{adaptive\\sharpness} & sharpness & \makecell{adaptive\\sharpness}\\
\midrule
$\tau$ (rank corr.) & $0.174$ & $\mf{0.636}$ & $0.257$ & $\mf{0.616}$\\
mini-batch size & $0.667$ & $\mf{0.696}$ & $0.777$ & $\mf{0.817}$\\
learning rate & $0.563$ & $\mf{0.577}$ & $0.797$ & $\mf{0.806}$\\
weight decay & $-0.297$ & $\mf{0.534}$ & $-0.469$ & $\mf{0.656}$\\
dropout rate & $\mf{0.102}$ & $-0.092$ & $0.161$ & $\mf{0.225}$\\
$\Psi$ (avg.) & $0.259$ & $\mf{0.429}$ & $0.316$ & $\mf{0.626}$\\
\bottomrule
\end{tabular}
\end{small}
\end{center}
%\vskip -0.3in
\end{table}

To confirm that adaptive sharpness actually has a stronger correlation with generalization gap than sharpness, we compare \textit{rank statistics} which demonstrate the change of adaptive sharpness and sharpness with respect to generalization gap. For correlation analysis, we use $4$ hyper-parameters: mini-batch size, initial learning rate, weight decay coefficient and dropout rate. As can be seen in Table~\ref{rank}, Kendall rank correlation coefficient \citep{kendall} of adaptive sharpness is greater than that of sharpness regardless of the value of $p$. Furthermore, we compare granulated coefficients \citep{jiang2019fantastic} with respect to different hyper-parameters to measure the effect of each hyper-parameter separately. In Table~\ref{rank}, the coefficients of adaptive sharpness are higher in most hyper-parameters and the average as well. Scatter plots illustrated in Figure~\ref{scatter} also show stronger correlation of adaptive sharpness. The difference in correlation behaviors of adaptive sharpness and sharpness provides an evidence that scale-invariant property helps strengthen the correlation with generalization gap. The experimental details are described in Appendix~\ref{corrdetail}.

Although there are various normalization methods other than the normalization operators introduced above, this paper covers only element-wise and filter-wise normalization operators. Node-wise normalization can also be viewed as a normalization operator. \citet{tsuzuku2020normalized} suggest node-wise normalization method for obtaining normalized flatness. However, the method requires that the parameter should be at a critical point. Also, in the case of node-wise normalization using unit-invariant SVD \citep{uhlmann2018generalized}, there is a concern that the speed of the optimizer can be degraded due to the significant additional cost for scale-direction decomposition of weight tensors. Therefore the node-wise normalization is not covered in this paper. In the case of layer-wise normalization using spectral norm or Frobenius norm of weight matrices \citep{neyshabur2017exploring}, the condition $T_{A\mf{w}}^{-1}A=T_\mf{w}^{-1}$ is not satisfied. Therefore, it cannot be used for adaptive sharpness so we do not cover it in this paper. 

Meanwhile, even though all weight parameters including biases can have scale-dependency, there remains more to consider when applying normalization to the biases. In terms of bias parameters of rectifier neural networks, there also exists translation-dependency in sharpness, which weakens the correlation with the generalization gap as well. Using the similar arguments as in the proof of Theorem~\ref{thm1}, it can be derived that diagonal elements of $T_{\mf{w}}$ corresponding to biases must be replaced by constants to guarantee translation-invariance, which induces adaptive sharpness that corresponds to the case of not applying bias normalization. We compare the generalization performance based on adaptive sharpness with and without bias normalization, in Section~\ref{sec:e}.
%On the other hand, even though all weight parameters including biases can have scale-dependency, there remains more to consider about applying normalization to the biases. In bias parameters, there is translation-dependency, which is one of the causes of non-identifiability in rectifier neural networks, in addition to scale-dependency, and in order to make sharpness translation-invariant, the components of $T_{\mf{w}}$ which are divided to biases must be constant for the values of biases. Therefore, there are two options for bias parameters: guaranteeing scale invariance by normalization and ensuring translation invariance by not performing normalization. Since it is difficult to make a qualitative investigation about which of the two guarantees is more appropriate, we experimentally confirm which method helps to improve generalization performance in training procedures in Section~\ref{sec:e}.

There are several previous works which are closely related to adaptive sharpness.
\citet{li2018visualizing}, which suggest a methodology for visualizing loss landscape, is related to adaptive sharpness. In that study, filter-wise normalization which is equivalent to the definition in Equation~\ref{filterwise} is used to remove scale-dependency from loss landscape and make comparisons between loss functions meaningful. In spite of their empirical success, \citet{li2018visualizing} do not provide a theoretical evidence for explaining how filter-wise scaling contributes the scale-invariance and correlation with generalization. In this paper, we clarify how the filter-wise normalization relates to generalization by proving the scale-invariant property of adaptive sharpness in Theorem \ref{thm1}.

Also, sharpness suggested in \citet{keskar2017large} can be regarded as a special case of adaptive sharpness which uses $p=\infty$ and the element-wise normalization operator.
\citet{jiang2019fantastic} confirm experimentally that the adaptive sharpness suggested in \citet{keskar2017large} shows a higher correlation with the generalization gap than sharpness which does not use element-wise normalization operator.
%\citet{jiang2019fantastic} confirm experimentally that cuboid region $\Vert T^{-1}_\mf{w} \bm{\epsilon}\Vert _{\infty} \leq \rho$ which corresponds to adaptive sharpness with $p=\infty$ and the element-wise normalization operator shows a higher correlation with the generalization gap than square region $\Vert \bm{\epsilon}\Vert _{\infty} \leq \rho$. 
This experimental result implies that Theorem~\ref{thm1} is also practically validated.

Therefore, it seems that sharpness with $p=\infty$ suggested by \citet{keskar2017large} also can be used directly for learning as it is, but a problem arises in terms of generalization performance in learning. \citet{foret2021sharpnessaware} confirm experimentally that the generalization performance with sharpness defined in square region $\Vert \bm{\epsilon}\Vert _{\infty} \leq \rho$ result is worse than when SAM is performed with sharpness defined in spherical region $\Vert \bm{\epsilon}\Vert _{2} \leq \rho$.

We conduct performance comparison tests for $p=2$ and $p=\infty$, and experimentally reveal that $p=2$ is more suitable for learning as in \citet{foret2021sharpnessaware}. The experimental results are shown in Section~\ref{sec:e}.

\section{Adaptive Sharpness-Aware Minimization} \label{sec:a}

In the previous section, we introduce a scale-invariant measure called adaptive sharpness to overcome the limitation of sharpness.
As in sharpness, we can obtain a generalization bound using adaptive sharpness, which is presented in the following theorem.
\begin{thm}\label{thm2}
Let $T^{-1}_\mf{w}$ be the normalization operator on $\mathbb{R}^k$.
If $L_D(\mf w) \leq E_{\epsilon_i \sim \mathcal{N}(0, \sigma^2)} [L_D(\mf w+\bm\epsilon)]$ for some $\sigma > 0$,
then with probability $1-\delta$, 
\begin{equation} \label{eq:a1}
L_{D}(\mf{w}) \leq \max_{\Vert T^{-1}_\mf{w} \bm{\epsilon}\Vert _{2} \leq \rho } L_{S}(\mf{w}+\bm{\epsilon}) + h\left(\frac{\Vert \mf{w}\Vert _{2}^2}{\eta^2 \rho^2}\right)
\end{equation}
where $h: \mathbb{R}^+ \rightarrow \mathbb{R}^+$ is a strictly increasing function, $n = \vert S\vert$ and $\rho = \sqrt{k}\sigma (1+\sqrt{\log n / k}) /\eta$.
\end{thm}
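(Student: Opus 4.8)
The plan is to reduce the statement to the PAC-Bayesian bound that underlies SAM (Equation~\ref{eq:p1}), exploiting the fact that the Euclidean ball appearing there is contained in the ellipsoidal maximization region of adaptive sharpness. First I would instantiate the standard PAC-Bayes generalization bound, as in \citet{foret2021sharpnessaware}, with the \emph{isotropic} posterior $Q=\mathcal{N}(\mf{w},\sigma^2 I_k)$ and a data-independent Gaussian prior $P=\mathcal{N}(0,\sigma_P^2 I_k)$. This yields, with probability at least $1-\delta$ over $S$,
\[
E_{\bm{\epsilon}\sim\mathcal{N}(0,\sigma^2 I)}[L_D(\mf{w}+\bm{\epsilon})] \le E_{\bm{\epsilon}\sim\mathcal{N}(0,\sigma^2 I)}[L_S(\mf{w}+\bm{\epsilon})] + \sqrt{\frac{\mathrm{KL}(Q\Vert P)+\log(n/\delta)}{2(n-1)}},
\]
and the hypothesis $L_D(\mf{w})\le E_{\bm{\epsilon}}[L_D(\mf{w}+\bm{\epsilon})]$ lets me lower-bound the left-hand side by $L_D(\mf{w})$.

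The only genuinely new ingredient is controlling the perturbed empirical risk $E_{\bm{\epsilon}}[L_S(\mf{w}+\bm{\epsilon})]$ by the adaptive sharpness term. I would split on the event $\{\Vert\bm{\epsilon}\Vert_2\le\eta\rho\}$. By the Laurent--Massart $\chi^2$ tail bound, the prescribed value $\rho=\sqrt{k}\,\sigma(1+\sqrt{\log n/k})/\eta$ makes the complementary event have probability at most $1/\sqrt{n}$, so, using $L_S\le 1$, it contributes only an $O(1/\sqrt{n})$ term. On the good event I invoke the key containment: since $T_{\mf{w}}$ (which by construction includes the $\eta I_k$ stabilizer) satisfies $T_{\mf{w}}\succeq\eta I_k$, its inverse has operator norm at most $1/\eta$, so $\Vert\bm{\epsilon}\Vert_2\le\eta\rho$ forces $\Vert T_{\mf{w}}^{-1}\bm{\epsilon}\Vert_2\le\rho$; hence $L_S(\mf{w}+\bm{\epsilon})\le\max_{\Vert T_{\mf{w}}^{-1}\bm{\epsilon}'\Vert_2\le\rho}L_S(\mf{w}+\bm{\epsilon}')$. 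Combining the two cases gives $E_{\bm{\epsilon}}[L_S(\mf{w}+\bm{\epsilon})]\le\max_{\Vert T_{\mf{w}}^{-1}\bm{\epsilon}\Vert_2\le\rho}L_S(\mf{w}+\bm{\epsilon})+O(1/\sqrt{n})$.

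Finally I would collect the residual terms into a single strictly increasing function of $\Vert\mf{w}\Vert_2^2/(\eta^2\rho^2)$. From $\eta^2\rho^2=k\sigma^2(1+\sqrt{\log n/k})^2$ one has $\Vert\mf{w}\Vert_2^2/\sigma^2=k(1+\sqrt{\log n/k})^2\,\Vert\mf{w}\Vert_2^2/(\eta^2\rho^2)$, and the closed form of the Gaussian KL divergence makes $\mathrm{KL}(Q\Vert P)$ a strictly increasing function of $\Vert\mf{w}\Vert_2^2/\sigma^2$, while the $O(1/\sqrt{n})$ and $\log(n/\delta)$ contributions are constants once $n$ and $\delta$ are fixed; therefore the right-hand side takes the form $\max_{\Vert T_{\mf{w}}^{-1}\bm{\epsilon}\Vert_2\le\rho}L_S(\mf{w}+\bm{\epsilon})+h(\Vert\mf{w}\Vert_2^2/(\eta^2\rho^2))$. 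I expect the main obstacle to be the one already present in \citet{foret2021sharpnessaware}: the PAC-Bayes prior must be chosen before seeing the data, whereas the KL-minimizing prior variance $\sigma_P^2$ depends on $\Vert\mf{w}\Vert_2$. This is handled by a union bound over a countable grid of candidate values of $\sigma_P$, which only inflates the logarithmic term and does not change the structure of the bound.
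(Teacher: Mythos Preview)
Your proposal is correct and follows essentially the same route as the paper: the paper also combines a PAC-Bayesian bound under isotropic Gaussian perturbations with the Laurent--Massart $\chi^2$ tail (their Lemma~\ref{concent}) and the operator-norm inequality $\Vert T_{\mf w}^{-1}\Vert_2\le 1/\eta$ to pass from the Euclidean ball to the ellipsoidal region. The only cosmetic difference is that the paper invokes Theorem~3.2 of \citet{chatterji2019intriguing} as a black box for the PAC-Bayes step (which already absorbs the union bound over prior variances), whereas you spell out the McAllester bound and the prior-grid argument explicitly.
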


\begin{algorithm}[tb]
   \caption{ASAM algorithm ($p=2$)}
   \label{alg1}
\begin{algorithmic}
   \STATE {\bfseries Input:} Loss function $l$, training dataset $S:=\cup^n_{i=1}\{(\mf{x}_i, \mf{y}_i)\}$, mini-batch size $b$, radius of maximization region $\rho$, weight decay coefficient $\lambda$, scheduled learning rate $\alpha$, initial weight $\mf{w}_0$.
   \STATE {\bfseries Output:} Trained weight $\mf{w}$
   \STATE Initialize weight $\mf{w} \coloneqq \mf{w}_0$
   \STATE {\bfseries while} not converged  {\bfseries do} 
   \STATE   \hspace{0.05\linewidth} Sample a mini-batch $B$ of size $b$ from $S$
   \STATE   \hspace{0.05\linewidth} $\bm{\epsilon} \coloneqq \rho \frac {\displaystyle T^2_{\mf{w}} \nabla L_B(\mf{w})} {\displaystyle \Vert T_{\mf{w}} \nabla L_B(\mf{w}) \Vert_2}$
   \STATE   \hspace{0.05\linewidth} $\mf{w} \coloneqq \mf{w} - \alpha \left( \nabla L_B(\mf{w} + \bm{\epsilon}) + \lambda \mf{w} \right)^{\dagger}$
   \STATE {\bfseries end while}
   \STATE {\bfseries return} $\mf{w}$
\end{algorithmic}
\end{algorithm}

Note that Theorem \ref{thm2} still holds for $p > 2$ due to the monotonicity of $p$-norm, i.e.,
if $0<r<p$, $\Vert\mf{x}\Vert_p \leq \Vert\mf{x}\Vert_r$ for any $\mf{x}\in\mathbb{R}^n$.
If $T_\mf{w}$ is an identity operator, Equation~\ref{eq:a1} is reduced equivalently to Equation~\ref{eq:p1}.
The proof of Equation~\ref{eq:a1} is described in detail in Appendix~\ref{app1}. 

The right hand side of Equation \ref{eq:a1}, i.e., generalization bound, can be expressed using adaptive sharpness as
\ificml
    \begin{equation*}
    \resizebox{\hsize}{!}{$\displaystyle\left(\max_{\Vert T^{-1}_\mf{w} \bm{\epsilon}\Vert _{p} \leq \rho } L_{S}(\mf{w}+\bm{\epsilon}) - L_S(\mf{w})\right) + L_S(\mf{w}) + h\left(\frac{\Vert \mf{w}\Vert _{2}^2}{\eta^2 \rho^2}\right).$}
    \end{equation*}
\else
    \begin{equation*}
    \left(\max_{\Vert T^{-1}_\mf{w} \bm{\epsilon}\Vert _{p} \leq \rho } L_{S}(\mf{w}+\bm{\epsilon}) - L_S(\mf{w})\right) + L_S(\mf{w}) + h\left(\frac{\Vert \mf{w}\Vert _{2}^2}{\eta^2 \rho^2}\right).
    \end{equation*}
\fi

Since $h\left(\Vert \mf{w}\Vert _{2}^2 / \eta^2 \rho^2 \right)$ is a strictly increasing function with respect to $\Vert \mf{w}\Vert _2^2$, it can be substituted with $\ell^2$ weight decaying regularizer. Therefore, we can define adaptive sharpness-aware minimization problem as
\begin{equation}\label{eq:a4}
    \min_{\mf{w}} \max_{\Vert T^{-1}_\mf{w} \bm{\epsilon}\Vert_{p} \leq \rho } L_{S}(\mf{w}+\bm{\epsilon}) + \frac{\lambda}{2}\Vert \mf{w}\Vert _2^2.
\end{equation}
To solve the minimax problem in Equation \ref{eq:a4}, it is necessary to find optimal $\bm{\epsilon}$ first.
Analogous to SAM, we can approximate the optimal $\bm{\epsilon}$ to maximize $L_{S}(\mf{w}+\bm{\epsilon})$ using a first-order approximation as
\begin{align*}
    \tilde{\bm{\epsilon}}_{t} 
    & = \argmax_{\Vert \tilde{\bm{\epsilon}} \Vert _{p} \leq \rho } L_{S}(\mf{w}_t + T_{\mf{w}_t} \tilde{\bm{\epsilon}}) \\
    & \approx \argmax_{\Vert \tilde{\bm{\epsilon}}\Vert _{p} \leq \rho } \tilde{\bm{\epsilon}}^\top T_{\mf{w}_t} \nabla L_S(\mf{w}_t) \\
    & = \rho \operatorname{sign}(\nabla L_S(\mf{w}_t)) \frac {\displaystyle \vert T_{\mf{w}_t} \nabla L_S(\mf{w}_t) \vert^{q-1}} {\displaystyle \Vert T_{\mf{w}_t} \nabla L_S(\mf{w}_t) \Vert^{q-1}_q}
\end{align*}
\noindent where $\tilde{\bm{\epsilon}} = T^{-1}_{\mf{w}} \bm{\epsilon}.$
Then, the two-step procedure for adaptive sharpness-aware minimization (ASAM) is expressed as
\begin{equation*}
  \left\{
    \begin{array}{ll}
      \bm{\epsilon}_{t} = \rho T_{\mf{w}_t}\operatorname{sign}(\nabla L_S(\mf{w}_t)) \frac {\displaystyle \vert T_{\mf{w}_t} \nabla L_S(\mf{w}_t) \vert^{q-1}} {\displaystyle \Vert T_{\mf{w}_t} \nabla L_S(\mf{w}_t) \Vert^{q-1}_q} \\
      \mf{w}_{t+1} = \mf{w}_t - \alpha_t \left( \nabla L_S(\mf{w}_t + \bm{\epsilon}_{t}) + \lambda \mf{w}_t \right)
    \end{array}
  \right.
\end{equation*}
for $t=0,1,2,\cdots$.
Especially, if $p=2$,
\[\bm{\epsilon}_t = \rho \frac {\displaystyle T^2_{\mf{w}_t} \nabla L_S(\mf{w}_t)} {\displaystyle \Vert T_{\mf{w}_t} \nabla L_S(\mf{w}_t) \Vert_2}\]
and if $p=\infty$,
\[\bm{\epsilon}_t = \rho T_{\mf{w}_t} \operatorname{sign}(\nabla L_S(\mf{w}_t)).\]

In this study, experiments are conducted on ASAM in cases of $p=\infty$ and $p=2$. 
The ASAM algorithm with $p=2$ is described in detail on Algorithm~\ref{alg1}. Note that the SGD \citep{nesterov1983method} update marked with $\dagger$ in Algorithm~\ref{alg1} can be combined with momentum or be replaced by update of another optimization scheme such as Adam~\citep{kingma2015adam}.

\section{Experimental Results} \label{sec:e}
In this section, we evaluate the performance of ASAM.
We first show how SAM and ASAM operate differently in a toy example.
We then compare the generalization performance of ASAM with other learning algorithms for various model architectures and various datasets: CIFAR-10, CIFAR-100 \citep{krizhevsky2009cifar}, ImageNet \citep{imagenet} and IWSLT'14 DE-EN~\citep{cettolo2014report}. Finally, we show how robust to label noise ASAM is.

\subsection{Toy Example} \label{toy}

\begin{figure}
\centering
\captionsetup{justification=centering}
\includegraphics[width=\linewidth]{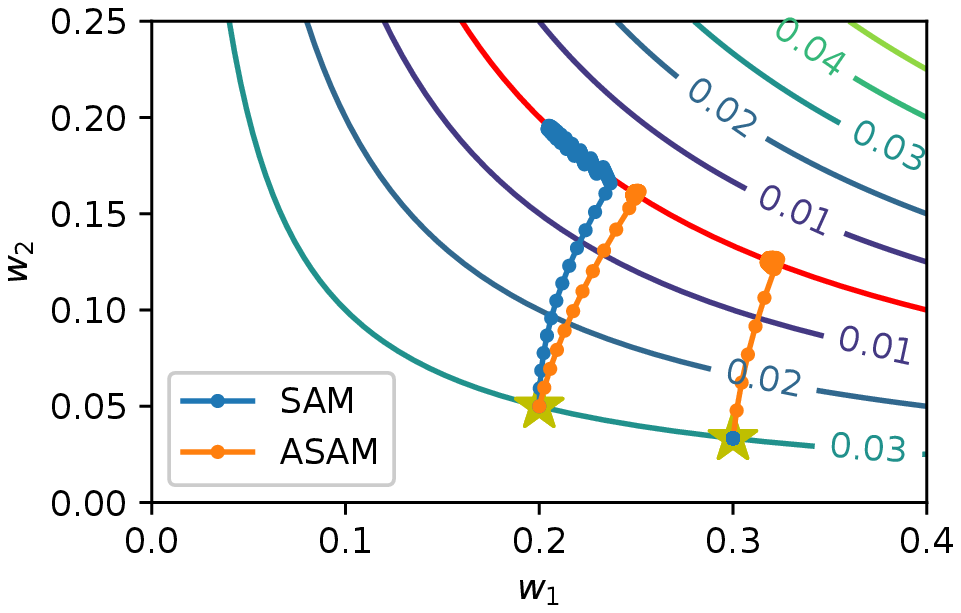}
\caption{Trajectories of SAM and ASAM.}\label{contour}
\end{figure}

As mentioned in Section \ref{sec:a}, sharpness varies by parameter re-scaling even if its loss function remains the same, while adaptive sharpness does not.
To elaborate this, we consider a simple loss function $L(\mf{w})=|w_1\mathrm{ReLU}(w_2) - 0.04|$ where $\mf{w}=(w_1,w_2)\in\mathbb{R}^2$.
Figure \ref{contour} presents the trajectories of SAM and ASAM with two different initial weights $\mf{w}_0=(0.2,0.05)$ and $\mf{w}_0=(0.3,0.033)$.
The red line represents the set of minimizers of the loss function $L$, i.e., $\{(w_1,w_2); w_1w_2=0.04,~w_2>0\}$.
As seen in Figure \ref{fig:sphere}, sharpness is maximized when $w_1=w_2$ within the same loss contour line, and therefore SAM tries to converge to $(0.2,0.2)$.
Here, we use $\rho=0.05$ as in \citet{foret2021sharpnessaware}.
On the other hand, adaptive sharpness remains the same along the same contour line which implies that ASAM converges to the point in the red line near the initial point as can be seen in Figure \ref{contour}.

Since SAM uses a fixed radius in a spherical region for minimizing sharpness, it may cause undesirable results depending on the loss surface and the current weight.
If $\mf{w}_0=(0.3,0.033)$, while SAM even fails to converge to the valley with $\rho=0.05$, ASAM converges no matter which $\mf{w}_0$ is used if $\rho<\sqrt{2}$. In other words, appropriate $\rho$ for SAM is dependent on the scales of $\mf{w}$ on the training trajectory, whereas $\rho$ of ASAM is not.

\begin{figure}[!t]
\begin{center}
\centerline{\includegraphics[width=\columnwidth]{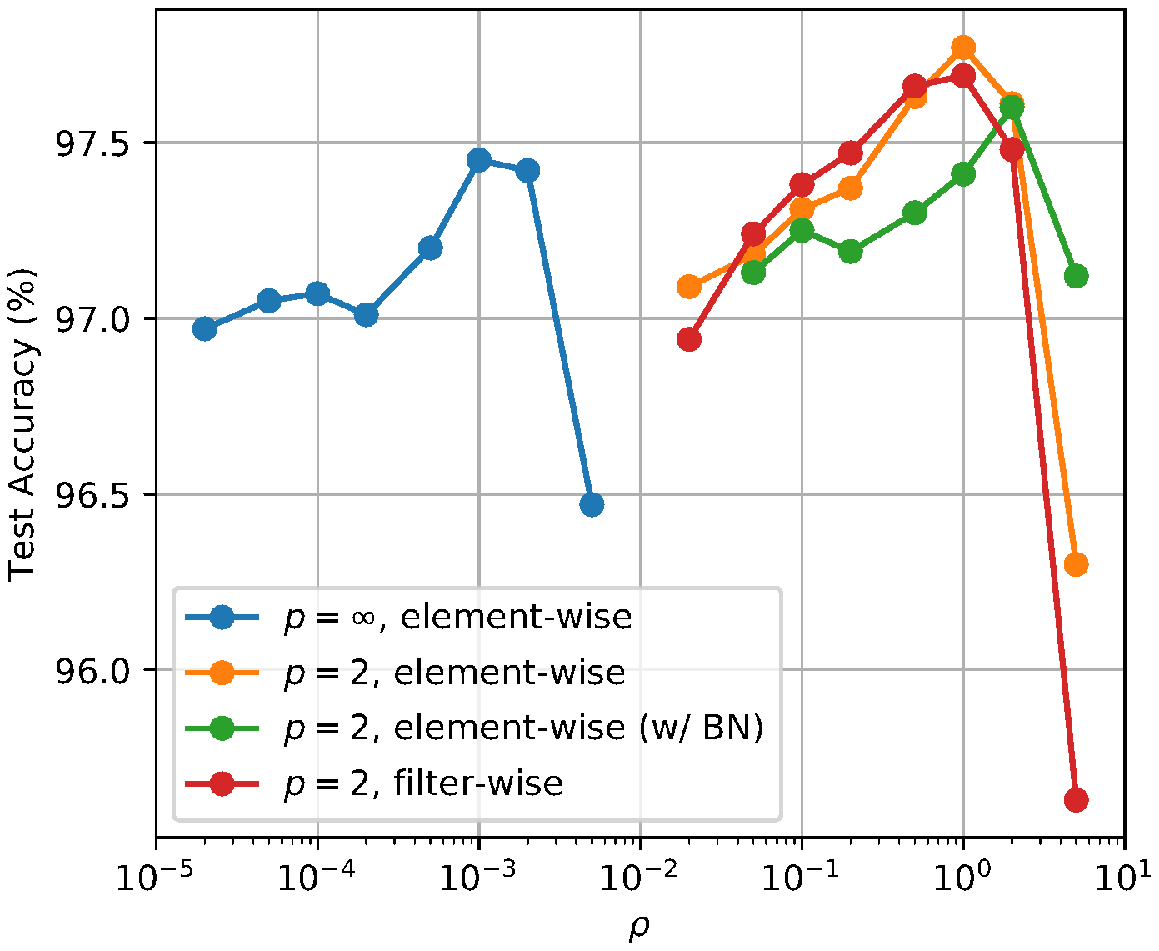}}
\caption{
Test accuracy curves obtained from ASAM algorithm using a range of $\rho$ with different factors: element-wise normalization with $p=\infty$, element-wise normalization with $p=2$ with and without bias normalization (BN) and filter-wise normalization with $p=2$.
}
\label{normalization}
\end{center}
\end{figure}

\begin{table}[t]
\ificml
\else
    \centering
\fi
\begin{threeparttable}
\caption{Maximum test accuracies for SGD, SAM and ASAM on CIFAR-10 dataset. \label{table1}}
%\vskip 0.05in
\begin{center}
\begin{small}
\begin{tabular}{lcccr}
\toprule
Model & SGD & SAM & ASAM \\
\midrule
DenseNet-121 & $91.00_{\pm 0.13}$ & $92.00_{\pm 0.17}$ & $\mf{93.33}_{\pm 0.04}$ \\
ResNet-20 & $93.18_{\pm 0.21}$ & $93.56_{\pm 0.15}$ & $\mf{93.82}_{\pm 0.17}$ \\
ResNet-56 & $94.58_{\pm 0.20}$ & $95.18_{\pm 0.15}$ & $\mf{95.42}_{\pm 0.16}$ \\
VGG19-BN$^*$ & $93.87_{\pm 0.09}$ & $94.60$ & $\mf{95.07}_{\pm 0.05}$ \\
{\fontsize{8}{9.6}\selectfont ResNeXt29-32x4d} & $95.84_{\pm 0.24}$ & $96.34_{\pm 0.30}$ & $\mf{96.80}_{\pm 0.06}$ \\
WRN-28-2 & $95.13_{\pm 0.16}$ & $95.74_{\pm 0.08}$ & $\mf{95.94}_{\pm 0.05}$ \\
WRN-28-10 & $96.34_{\pm 0.12}$ & $96.98_{\pm 0.04}$ & $\mf{97.28}_{\pm 0.07}$ \\
\midrule
%PyramidNet-272$^\dagger$ & $98.47$ & $98.58$ & $\mf{98.76}$ \\
{\fontsize{8.5}{9.6}\selectfont PyramidNet-272}$^\dagger$ & $98.44_{\pm 0.08}$ & $98.55_{\pm 0.05}$ & $\mf{98.68}_{\pm 0.08}$ \\
%{\fontsize{8.5}{9.6}\selectfont PyramidNet-272}$^\dagger$ & $98.47_{\pm 0.09}$ & $98.58_{\pm 0.01}$ & $\mf{98.76}_{\pm 0.01}$ \\
\bottomrule
\end{tabular}
\begin{tablenotes}
\item[*] Some runs completely failed, thus giving 10\% of accuracy (success rate: SGD: 3/5, SAM 1/5, ASAM 3/5)
\item[$\dagger$] PyramidNet-272 architecture is tested 3 times for each learning algorithm.
\end{tablenotes}
\end{small}
\end{center}
%\vskip -0.1in
\end{threeparttable}
\end{table}

\begin{table}[t]
\ificml
\else
    \centering
\fi
\begin{threeparttable}
\caption{Maximum test accuracies for SGD, SAM and ASAM on CIFAR-100 dataset. \label{table2}}
%\vskip 0.05in
\begin{center}
\begin{small}
\begin{tabular}{lcccr}
\toprule
Model & SGD & SAM & ASAM \\
\midrule
DenseNet-121 & $68.70_{\pm 0.31}$ & $69.84_{\pm 0.12}$ & $\mf{70.60}_{\pm 0.20}$ \\
ResNet-20 & $69.76_{\pm 0.44}$ & $71.06_{\pm 0.31}$ & $\mf{71.40}_{\pm 0.30}$ \\
ResNet-56 & $73.12_{\pm 0.19}$ & $75.16_{\pm 0.05}$ & $\mf{75.86}_{\pm 0.22}$ \\
VGG19-BN$^*$ & $71.80_{\pm 1.35}$ & $73.52_{\pm 1.74}$ & $\mf{75.80}_{\pm 0.27}$ \\
{\fontsize{8}{9.6}\selectfont ResNeXt29-32x4d} & $79.76_{\pm 0.23}$ & $81.48_{\pm 0.17}$ & $\mf{82.30}_{\pm 0.11}$ \\
WRN-28-2 & $75.28_{\pm 0.17}$ & $77.25_{\pm 0.35}$ & $\mf{77.54}_{\pm 0.14}$ \\
WRN-28-10 & $81.56_{\pm 0.13}$ & $83.42_{\pm 0.04}$ & $\mf{83.68}_{\pm 0.12}$ \\
\midrule
%PyramidNet-272$^\dagger$ & $88.88$ & $89.36$ & $\mf{90.05}$ \\
{\fontsize{8.5}{9.6}\selectfont PyramidNet-272}$^\dagger$ & $88.91_{\pm 0.12}$ & $89.36_{\pm 0.20}$ & $\mf{89.90}_{\pm 0.13}$ \\
%{\fontsize{8.5}{9.6}\selectfont PyramidNet-272}$^\dagger$ & $88.88_{\pm 0.16}$ & $89.36_{\pm 0.20}$ & $\mf{90.05}_{\pm 0.10}$ \\
\bottomrule
\end{tabular}
\begin{tablenotes}
\item[*] Some runs completely failed, thus giving 10\% of accuracy (success rate: SGD: 5/5, SAM 4/5, ASAM 4/5)
\item[$\dagger$] PyramidNet-272 architecture is tested 3 times for each learning algorithm.
\end{tablenotes}
\end{small}
\end{center}
%\vskip -0.1in
\end{threeparttable}
\end{table}

\subsection{Image Classification: CIFAR-10/100 and ImageNet} \label{cifar}
To confirm the effectiveness of ASAM, we conduct comparison experiments with SAM using CIFAR-10 and CIFAR-100 datasets.
We use the same data split as the original paper \citep{krizhevsky2009cifar}. The hyper-parameters used in this test is described in Table~\ref{table2}. Before the comparison tests with SAM, there are three factors to be chosen in ASAM algorithm:
\begin{itemize}
\item normalization schemes: element-wise vs. filter-wise
\item $p$-norm: $p=\infty$ vs. $p=2$
\item bias normalization: with vs. without
\end{itemize}

First, we perform the comparison test for filter-wise and element-wise normalization using WideResNet-16-8 model \citep{zagoruyko2016wide} and illustrate the results in Figure~\ref{normalization}. As can be seen, both test accuracies are comparable across $\rho$, and element-wise normalization provides a slightly better accuracy at $\rho=1.0$. %we decide to use the element-wise normalization in the remaining tests.

Similarly, Figure~\ref{normalization} shows how much test accuracy varies with the maximization region for adaptive sharpness. It can be seen that $p=2$ shows better test accuracies than $p=\infty$, which is consistent with \citet{foret2021sharpnessaware}. %Thus, we determine to use $p=2$ as the maximization region for adaptive sharpness.
We could also observe that bias normalization does not contribute to the improvement of test accuracy in Figure~\ref{normalization}. Therefore, we decide to use element-wise normalization operator and $p=2$, and not to employ bias normalization in the remaining tests.

As ASAM has a hyper-parameter $\rho$ to be tuned, we first conduct a grid search over \{$0.00005$, $0.0001$, $0.0002$, \ldots, $0.5$, $1.0$, $2.0$\} for finding appropriate values of $\rho$. We use $\rho=0.5$ for CIFAR-10 and $\rho=1.0$ for CIFAR-100, because it gives moderately good performance across various models. We set $\rho$ for SAM as $0.05$ for CIFAR-10 and $0.1$ for CIFAR-100 as in \citet{foret2021sharpnessaware}. $\eta$ for ASAM is set to $0.01$. 
We set mini-batch size to $128$, and $m$-sharpness suggested by \citet{foret2021sharpnessaware} is not employed. The number of epochs is set to $200$ for SAM and ASAM and $400$ for SGD. Momentum and weight decay coefficient are set to $0.9$ and $0.0005$, respectively. 
Cosine learning rate decay \citep{loshchilov2016sgdr} is adopted with an initial learning rate $0.1$. Also, random resize, padding by four pixels, normalization and random horizontal flip are applied for data augmentation and label smoothing \citep{NEURIPS2019_f1748d6b} is adopted with its factor of $0.1$.

Using the hyper-parameters, we compare the best test accuracies obtained by SGD, SAM and ASAM for various rectifier neural network models: VGG \citep{simonyan2014very}, ResNet \citep{he2016deep}, DenseNet \citep{huang2017densely}, WideResNet \citep{zagoruyko2016wide}, and ResNeXt \citep{xie2017aggregated}.

For PyramidNet-272~\citep{han2017deep}, we additionally apply some latest techniques: AutoAugment~\citep{cubuk2019autoaugment}, CutMix~\citep{yun2019cutmix} and ShakeDrop~\citep{yamada2019shakedrop}. We employ the $m$-sharpness strategy with $m=32$. Initial learning rate and mini-batch size are set to $0.05$ and $256$, respectively. The number of epochs is set to $900$ for SAM and ASAM and $1800$ for SGD. We choose $\rho$ for SAM as $0.05$, as in \citet{foret2021sharpnessaware}, and $\rho$ for ASAM as $1.0$ for both CIFAR-10 and CIFAR-100. Every entry in the tables represents mean and standard deviation of 5 independent runs. In both CIFAR-10 and CIFAR-100 cases, ASAM generally surpasses SGD and SAM, as can be seen in Table~\ref{table1} and Table~\ref{table2}.

For the sake of evaluations at larger scale, we compare the performance of SGD, SAM and ASAM on ImageNet. We apply each method with ResNet-50 and use $\rho=0.05$ for SAM and $\rho=1.0$ for ASAM. The number of training epochs is $200$ for SGD and $100$ for SAM and ASAM. We use mini-batch size $512$, initial learning rate $0.2$, and SGD optimizer with weight decay coefficient $0.0001$. Other hyper-parameters are the same as those of CIFAR-10/100 tests. We also employ $m$-sharpness with $m=128$ for both SAM and ASAM.

Table~\ref{imagenet} shows mean and standard deviation of maximum test accuracies over $3$ independent runs for each method. As can be seen in the table, ASAM achieves higher accuracies than SGD and SAM. These results imply that ASAM can enhance generalization performance of rectifier neural network architectures in image classification task beyond CIFAR.

\begin{table}[h]
\setlength\tabcolsep{4.5pt}
\caption{Top1 and Top5 maximum test accuracies for SGD, SAM and ASAM on ImageNet dataset using ResNet-50. \label{imagenet}}
\begin{center}
\begin{small}
\begin{tabular}{lcccc}
\toprule
& SGD & SAM & ASAM\\
\midrule
Top1 & $75.79_{\pm 0.22}$ & $76.39_{\pm 0.03}$ & $\mf{76.63}_{\pm 0.18}$\\
Top5 & $92.62_{\pm 0.04}$ & $92.97_{\pm 0.07}$ & $\mf{93.16}_{\pm 0.18}$\\
\bottomrule
\end{tabular}
\end{small}
\end{center}
%\vskip -0.1in
\end{table}

\subsection{Machine Translation: IWSLT'14 DE-EN}
To validate effectiveness of ASAM in tasks other than image classification, we apply SAM and ASAM to IWSLT'14 DE-EN, a dataset on machine translation task. 

In this test, we adopt Transformer architecture~\citep{vaswani2017attention} and Adam optimizer as a base optimizer of SAM and ASAM instead of SGD. Learning rate, $\beta_1$ and $\beta_2$ for Adam are set to $0.0005$, $0.9$ and $0.98$, respectively. Dropout rate and weight decay coefficient are set to 0.3 and 0.0001, respectively. Label smoothing is adopted with its factor $0.1$. We choose $\rho=0.1$ for SAM and $\rho=0.2$ for ASAM as a result of a grid search over \{$0.005$, $0.01$, $0.02$, \ldots, $0.5$, $1.0$, $2.0$\} using validation dataset. The results of the experiments are obtained from 3 independent runs.

As can be seen in Table~\ref{iwslt}, we could observe improvement even on IWSLT'14 in BLEU score when using Adam+ASAM instead of Adam or Adam+SAM.

\begin{table}[h]
\setlength\tabcolsep{4.5pt}
\caption{BLEU scores for Adam, Adam+SAM and Adam+ASAM on IWSLT'14 DE-EN dataset using Transformer. \label{iwslt}}
\begin{center}
\begin{small}
\begin{tabular}{lccc}
\toprule
BLEU score & Adam & Adam+SAM & Adam+ASAM\\
\midrule
Validation & $35.34_{\pm <0.01}$ & $35.52_{\pm 0.01}$ & $\mathbf{35.66}_{\pm <0.01}$\\
Test & $34.86_{\pm <0.01}$ & $34.78_{\pm 0.01}$ & $\mathbf{35.02}_{\pm <0.01}$\\
\bottomrule
\end{tabular}
\end{small}
\end{center}
\end{table}

%transformer, where SGD does not work well, in the above translation task. As Reviewer~6 predicted, Adam with ASAM seems to improve the performance, whereas Adam with SAM does not provide meaningful improvement. This shows the possibility that rescaling of $\bm{\epsilon}_t$ does contribute in defining meaningful neighborhoods on which to search for the adversarial direction $\bm{\epsilon}_t$, even in the case where SGD does not work well.

\subsection{Robustness to Label Noise} \label{robust}

As shown in \citet{foret2021sharpnessaware}, SAM is as robust to label noise in the training data as MentorMix \citep{jiang2020beyond}, which is a state-of-the-art method.  
%SAM is robust to label noise in the training data, as it has shown its robustness comparable to MentorMix \citep{jiang2020beyond}, which is a state-of-the-art method. 
We expect that ASAM would share the robustness to label noise. To confirm this, we compare the test accuracies of SGD, SAM and ASAM for ResNet-32 model and CIFAR-10 dataset whose labels in the training data are corrupted by symmetric label noise \citep{van2015learning} with noise levels of 20\%, 40\%, 60\% and 80\%, and the test data is not touched. Hyper-parameter settings are the same as that of previous CIFAR experiments.
Table~\ref{noise32} shows test accuracies for SGD, SAM and ASAM obtained from 3 independent runs with respect to label noise levels. Compared to SGD and SAM, ASAM generally enhances the test accuracy across various noise level by retaining the robustness to label noise.

\begintable
\caption{Maximum test accuracies of ResNet-32 models trained on CIFAR-10 with label noise.\label{noise32}}
\begin{center}
\begin{small}
\begin{tabular}{lccccr}
\toprule
Noise rate & SGD & SAM & ASAM \\
\midrule
0\% & $94.50_{\pm 0.11}$ & $94.80_{\pm 0.12}$ & $\mf{94.88}_{\pm 0.12}$ \\
20\% & $91.32_{\pm 0.23}$ & $92.94_{\pm 0.12}$ & $\mf{93.21}_{\pm 0.10}$ \\
40\% & $87.68_{\pm 0.05}$ & $90.62_{\pm 0.18}$ & $\mf{90.89}_{\pm 0.13}$ \\
60\% & $82.50_{\pm 0.30}$ & $86.58_{\pm 0.30}$ & $\mf{87.41}_{\pm 0.16}$ \\
80\% & $68.35_{\pm 0.85}$ & $\mf{69.92}_{\pm 0.98}$ & $67.69_{\pm 1.34}$ \\
\bottomrule
\end{tabular}
\end{small}
\end{center}
%\vskip -0.1in
\jmkendtable

\ificml
\else
    \clearpage
\fi

\section{Conclusions}\label{sec:c}
%In this paper, we have introduced adaptive sharpness whose scale invariance property improves training path in weight space by adjusting their maximization region with respect to weight scale. Also, we have confirmed empirically that this property, which ASAM also has, contributes the improvement of generalization performance. The surpassing performance of ASAM is notable in that the comparison tests are conducted with SAM, which is known as a currently state-of-the-art learning algorithm in many image-classification benchmarks. In addition to the contribution as learning algorithm, adaptive sharpness can serve as a generalization measure due to stronger correlation with generalization gap benefiting from their scale-invariant property. So adaptive sharpness has a potential to be a metric for assessment of neural networks. We have also suggested the condition of normalization operator for adaptive sharpness but we do not cover all the normalization schemes which satisfy the condition. So this area is expected to be more investigated for better generalization performance in the future work. 
In this paper, we have introduced adaptive sharpness with scale-invariant property that improves training path in weight space by adjusting their maximization region with respect to weight scale. Also, we have confirmed that this property, which ASAM shares, contributes in improvement of generalization performance. The superior performance of ASAM is notable from the comparison tests conducted against SAM, which is currently state-of-the-art learning algorithm in many image classification benchmarks. In addition to the contribution as a learning algorithm, adaptive sharpness can serve as a generalization measure with stronger correlation with generalization gap benefiting from their scale-invariant property. Therefore adaptive sharpness has a potential to be a metric for assessment of neural networks. We have also suggested the condition of normalization operator for adaptive sharpness but we did not cover all the normalization schemes which satisfy the condition. So this area could be further investigated for better generalization performance in future works. 

\section{Acknowledgements}
We would like to thank Kangwook Lee, Jaedeok Kim and Yonghyun Ryu for supports on our machine translation experiments. We also thank our other colleagues at Samsung Research - Joohyung Lee, Chiyoun Park and Hyun-Joo Jung - for their insightful discussions and feedback.

\ificml
\else
    \clearpage
\fi

\bibliography{example_paper}

\begin{thebibliography}{40}
\providecommand{\natexlab}[1]{#1}
\providecommand{\url}[1]{\texttt{#1}}
\expandafter\ifx\csname urlstyle\endcsname\relax
  \providecommand{\doi}[1]{doi: #1}\else
  \providecommand{\doi}{doi: \begingroup \urlstyle{rm}\Url}\fi

\bibitem[Cettolo et~al.(2014)Cettolo, Niehues, St{\"u}ker, Bentivogli, and
  Federico]{cettolo2014report}
Cettolo, M., Niehues, J., St{\"u}ker, S., Bentivogli, L., and Federico, M.
\newblock Report on the 11th {IWSLT} evaluation campaign, {IWSLT} 2014.
\newblock In \emph{Proceedings of the International Workshop on Spoken Language
  Translation, Hanoi, Vietnam}, volume~57, 2014.

\bibitem[Chatterji et~al.(2019)Chatterji, Neyshabur, and
  Sedghi]{chatterji2019intriguing}
Chatterji, N., Neyshabur, B., and Sedghi, H.
\newblock The intriguing role of module criticality in the generalization of
  deep networks.
\newblock In \emph{International Conference on Learning Representations}, 2019.

\bibitem[Chaudhari et~al.(2019)Chaudhari, Choromanska, Soatto, LeCun, Baldassi,
  Borgs, Chayes, Sagun, and Zecchina]{chaudhari2019entropy}
Chaudhari, P., Choromanska, A., Soatto, S., LeCun, Y., Baldassi, C., Borgs, C.,
  Chayes, J., Sagun, L., and Zecchina, R.
\newblock Entropy-sgd: Biasing gradient descent into wide valleys.
\newblock \emph{Journal of Statistical Mechanics: Theory and Experiment},
  2019\penalty0 (12):\penalty0 124018, 2019.

\bibitem[Cubuk et~al.(2019)Cubuk, Zoph, Mane, Vasudevan, and
  Le]{cubuk2019autoaugment}
Cubuk, E.~D., Zoph, B., Mane, D., Vasudevan, V., and Le, Q.~V.
\newblock Autoaugment: Learning augmentation strategies from data.
\newblock In \emph{Proceedings of the IEEE/CVF Conference on Computer Vision
  and Pattern Recognition}, pp.\  113--123, 2019.

\bibitem[Deng et~al.(2009)Deng, Dong, Socher, Li, Li, and Fei-Fei]{imagenet}
Deng, J., Dong, W., Socher, R., Li, L.-J., Li, K., and Fei-Fei, L.
\newblock Image{N}et: A large-scale hierarchical image database.
\newblock In \emph{2009 IEEE conference on computer vision and pattern
  recognition}, pp.\  248--255. IEEE, 2009.

\bibitem[Dinh et~al.(2017)Dinh, Pascanu, Bengio, and Bengio]{dinh2017sharp}
Dinh, L., Pascanu, R., Bengio, S., and Bengio, Y.
\newblock Sharp minima can generalize for deep nets.
\newblock In \emph{International Conference on Machine Learning}, pp.\
  1019--1028. PMLR, 2017.

\bibitem[Foret et~al.(2021)Foret, Kleiner, Mobahi, and
  Neyshabur]{foret2021sharpnessaware}
Foret, P., Kleiner, A., Mobahi, H., and Neyshabur, B.
\newblock Sharpness-aware minimization for efficiently improving
  generalization.
\newblock In \emph{International Conference on Learning Representations}, 2021.

\bibitem[Han et~al.(2017)Han, Kim, and Kim]{han2017deep}
Han, D., Kim, J., and Kim, J.
\newblock Deep pyramidal residual networks.
\newblock In \emph{Proceedings of the IEEE conference on computer vision and
  pattern recognition}, pp.\  5927--5935, 2017.

\bibitem[He et~al.(2016)He, Zhang, Ren, and Sun]{he2016deep}
He, K., Zhang, X., Ren, S., and Sun, J.
\newblock Deep residual learning for image recognition.
\newblock In \emph{Proceedings of the IEEE conference on computer vision and
  pattern recognition}, pp.\  770--778, 2016.

\bibitem[Hochreiter \& Schmidhuber(1997)Hochreiter and
  Schmidhuber]{hochreiter1997flat}
Hochreiter, S. and Schmidhuber, J.
\newblock Flat minima.
\newblock \emph{Neural computation}, 9\penalty0 (1):\penalty0 1--42, 1997.

\bibitem[Hochreiter et~al.(1995)Hochreiter, Schmidhuber,
  et~al.]{hochreiter1995simplifying}
Hochreiter, S., Schmidhuber, J., et~al.
\newblock Simplifying neural nets by discovering flat minima.
\newblock \emph{Advances in neural information processing systems}, pp.\
  529--536, 1995.

\bibitem[Huang et~al.(2017)Huang, Liu, Van Der~Maaten, and
  Weinberger]{huang2017densely}
Huang, G., Liu, Z., Van Der~Maaten, L., and Weinberger, K.~Q.
\newblock Densely connected convolutional networks.
\newblock In \emph{Proceedings of the IEEE conference on computer vision and
  pattern recognition}, pp.\  4700--4708, 2017.

\bibitem[Jiang et~al.(2020)Jiang, Huang, Liu, and Yang]{jiang2020beyond}
Jiang, L., Huang, D., Liu, M., and Yang, W.
\newblock Beyond synthetic noise: Deep learning on controlled noisy labels.
\newblock In \emph{International Conference on Machine Learning}, pp.\
  4804--4815. PMLR, 2020.

\bibitem[Jiang et~al.(2019)Jiang, Neyshabur, Mobahi, Krishnan, and
  Bengio]{jiang2019fantastic}
Jiang, Y., Neyshabur, B., Mobahi, H., Krishnan, D., and Bengio, S.
\newblock Fantastic generalization measures and where to find them.
\newblock In \emph{International Conference on Learning Representations}, 2019.

\bibitem[Karakida et~al.(2019)Karakida, Akaho, and
  Amari]{karakida2019normalization}
Karakida, R., Akaho, S., and Amari, S.-i.
\newblock The normalization method for alleviating pathological sharpness in
  wide neural networks.
\newblock In \emph{Advances in Neural Information Processing Systems},
  volume~32. Curran Associates, Inc., 2019.

\bibitem[Kendall(1938)]{kendall}
Kendall, M.~G.
\newblock A new measure of rank correlation.
\newblock \emph{Biometrika}, 30\penalty0 (1/2):\penalty0 81--93, 1938.

\bibitem[Keskar et~al.(2017)Keskar, Nocedal, Tang, Mudigere, and
  Smelyanskiy]{keskar2017large}
Keskar, N.~S., Nocedal, J., Tang, P. T.~P., Mudigere, D., and Smelyanskiy, M.
\newblock On large-batch training for deep learning: Generalization gap and
  sharp minima.
\newblock In \emph{5th International Conference on Learning Representations,
  ICLR 2017}, 2017.

\bibitem[Kingma \& Ba(2015)Kingma and Ba]{kingma2015adam}
Kingma, D.~P. and Ba, J.
\newblock Adam: A method for stochastic optimization.
\newblock In \emph{ICLR (Poster)}, 2015.

\bibitem[Krizhevsky et~al.(2009)Krizhevsky, Nair, and
  Hinton]{krizhevsky2009cifar}
Krizhevsky, A., Nair, V., and Hinton, G.
\newblock {CIFAR}-10 and {CIFAR}-100 datasets.
\newblock 2009.
\newblock URL \url{https://www.cs.toronto.edu/~kriz/cifar.html}.

\bibitem[Laurent \& Massart(2000)Laurent and Massart]{laurent}
Laurent, B. and Massart, P.
\newblock Adaptive estimation of a quadratic functional by model selection.
\newblock \emph{Annals of Statistics}, pp.\  1302--1338, 2000.

\bibitem[Li et~al.(2018)Li, Xu, Taylor, Studer, and
  Goldstein]{li2018visualizing}
Li, H., Xu, Z., Taylor, G., Studer, C., and Goldstein, T.
\newblock Visualizing the loss landscape of neural nets.
\newblock In \emph{NIPS'18: Proceedings of the 32nd International Conference on
  Neural Information Processing Systems}, pp.\  6391--6401. Curran Associates
  Inc., 2018.

\bibitem[Liang et~al.(2019)Liang, Poggio, Rakhlin, and Stokes]{liang2019fisher}
Liang, T., Poggio, T., Rakhlin, A., and Stokes, J.
\newblock Fisher-rao metric, geometry, and complexity of neural networks.
\newblock In \emph{The 22nd International Conference on Artificial Intelligence
  and Statistics}, pp.\  888--896. PMLR, 2019.

\bibitem[Loshchilov \& Hutter(2016)Loshchilov and Hutter]{loshchilov2016sgdr}
Loshchilov, I. and Hutter, F.
\newblock Sgdr: Stochastic gradient descent with warm restarts.
\newblock \emph{arXiv preprint arXiv:1608.03983}, 2016.

\bibitem[McAllester(1999)]{mcallester1999pac}
McAllester, D.~A.
\newblock {PAC-B}ayesian model averaging.
\newblock In \emph{Proceedings of the twelfth annual conference on
  Computational learning theory}, pp.\  164--170, 1999.

\bibitem[Mobahi(2016)]{mobahi2016training}
Mobahi, H.
\newblock Training recurrent neural networks by diffusion.
\newblock \emph{arXiv preprint arXiv:1601.04114}, 2016.

\bibitem[M\"{u}ller et~al.(2019)M\"{u}ller, Kornblith, and
  Hinton]{NEURIPS2019_f1748d6b}
M\"{u}ller, R., Kornblith, S., and Hinton, G.~E.
\newblock When does label smoothing help?
\newblock In Wallach, H., Larochelle, H., Beygelzimer, A., d\textquotesingle
  Alch\'{e}-Buc, F., Fox, E., and Garnett, R. (eds.), \emph{Advances in Neural
  Information Processing Systems}, volume~32. Curran Associates, Inc., 2019.

\bibitem[Nesterov(1983)]{nesterov1983method}
Nesterov, Y.~E.
\newblock A method for solving the convex programming problem with convergence
  rate ${O}(1/k^2)$.
\newblock In \emph{Dokl. akad. nauk Sssr}, volume 269, pp.\  543--547, 1983.

\bibitem[Neyshabur et~al.(2017)Neyshabur, Bhojanapalli, Mcallester, and
  Srebro]{neyshabur2017exploring}
Neyshabur, B., Bhojanapalli, S., Mcallester, D., and Srebro, N.
\newblock Exploring generalization in deep learning.
\newblock In Guyon, I., Luxburg, U.~V., Bengio, S., Wallach, H., Fergus, R.,
  Vishwanathan, S., and Garnett, R. (eds.), \emph{Advances in Neural
  Information Processing Systems}, volume~30. Curran Associates, Inc., 2017.

\bibitem[Rooyen et~al.(2015)Rooyen, Menon, and Williamson]{van2015learning}
Rooyen, B.~v., Menon, A.~K., and Williamson, R.~C.
\newblock Learning with symmetric label noise: the importance of being
  unhinged.
\newblock In \emph{Proceedings of the 28th International Conference on Neural
  Information Processing Systems-Volume 1}, pp.\  10--18, 2015.

\bibitem[Simonyan \& Zisserman(2015)Simonyan and Zisserman]{simonyan2014very}
Simonyan, K. and Zisserman, A.
\newblock Very deep convolutional networks for large-scale image recognition.
\newblock In \emph{3rd International Conference on Learning Representations,
  {ICLR} 2015, San Diego, CA, USA, May 7-9, 2015, Conference Track
  Proceedings}, 2015.

\bibitem[Sun et~al.(2020)Sun, Zhang, Ren, Luo, and
  Li]{DBLP:journals/corr/abs-2006-05620}
Sun, X., Zhang, Z., Ren, X., Luo, R., and Li, L.
\newblock Exploring the vulnerability of deep neural networks: A study of
  parameter corruption.
\newblock \emph{CoRR}, abs/2006.05620, 2020.

\bibitem[Tsuzuku et~al.(2020)Tsuzuku, Sato, and
  Sugiyama]{tsuzuku2020normalized}
Tsuzuku, Y., Sato, I., and Sugiyama, M.
\newblock Normalized flat minima: Exploring scale invariant definition of flat
  minima for neural networks using {PAC-B}ayesian analysis.
\newblock In \emph{International Conference on Machine Learning}, pp.\
  9636--9647. PMLR, 2020.

\bibitem[Uhlmann(2018)]{uhlmann2018generalized}
Uhlmann, J.
\newblock A generalized matrix inverse that is consistent with respect to
  diagonal transformations.
\newblock \emph{SIAM Journal on Matrix Analysis and Applications}, 39\penalty0
  (2):\penalty0 781--800, 2018.

\bibitem[Vaswani et~al.(2017)Vaswani, Shazeer, Parmar, Uszkoreit, Jones, Gomez,
  Kaiser, and Polosukhin]{vaswani2017attention}
Vaswani, A., Shazeer, N., Parmar, N., Uszkoreit, J., Jones, L., Gomez, A.~N.,
  Kaiser, L., and Polosukhin, I.
\newblock Attention is all you need.
\newblock In \emph{NIPS}, 2017.

\bibitem[Xie et~al.(2017)Xie, Girshick, Doll{\'a}r, Tu, and
  He]{xie2017aggregated}
Xie, S., Girshick, R., Doll{\'a}r, P., Tu, Z., and He, K.
\newblock Aggregated residual transformations for deep neural networks.
\newblock In \emph{Proceedings of the IEEE conference on computer vision and
  pattern recognition}, pp.\  1492--1500, 2017.

\bibitem[Yamada et~al.(2019)Yamada, Iwamura, Akiba, and
  Kise]{yamada2019shakedrop}
Yamada, Y., Iwamura, M., Akiba, T., and Kise, K.
\newblock Shakedrop regularization for deep residual learning.
\newblock \emph{IEEE Access}, 7:\penalty0 186126--186136, 2019.

\bibitem[Yi et~al.(2019)Yi, Meng, Chen, Ma, and Liu]{yi2019positively}
Yi, M., Meng, Q., Chen, W., Ma, Z.-m., and Liu, T.-Y.
\newblock Positively scale-invariant flatness of relu neural networks.
\newblock \emph{arXiv preprint arXiv:1903.02237}, 2019.

\bibitem[Yue et~al.(2020)Yue, Nouiehed, and Kontar]{yue2020salr}
Yue, X., Nouiehed, M., and Kontar, R.~A.
\newblock Salr: Sharpness-aware learning rates for improved generalization.
\newblock \emph{arXiv preprint arXiv:2011.05348}, 2020.

\bibitem[Yun et~al.(2019)Yun, Han, Oh, Chun, Choe, and Yoo]{yun2019cutmix}
Yun, S., Han, D., Oh, S.~J., Chun, S., Choe, J., and Yoo, Y.
\newblock Cut{M}ix: Regularization strategy to train strong classifiers with
  localizable features.
\newblock In \emph{Proceedings of the IEEE/CVF International Conference on
  Computer Vision}, pp.\  6023--6032, 2019.

\bibitem[Zagoruyko \& Komodakis(2016)Zagoruyko and
  Komodakis]{zagoruyko2016wide}
Zagoruyko, S. and Komodakis, N.
\newblock Wide residual networks.
\newblock \emph{CoRR}, abs/1605.07146, 2016.

\end{thebibliography}
\bibliographystyle{icml2021}

\appendix
\onecolumn
\section{Proofs}

\subsection{Proof of Theorem \ref{thm2}} \label{app1}

We first introduce the following concentration inequality.

\begin{lem}\label{concent}
Let $\{\epsilon_i, i = 1,\ldots,k\}$ be independent normal variables with mean $0$ and variance $\sigma_i^2$.
Then,
\[P\left(\sum_{i=1}^k \epsilon_i^2 \geq k\sigma_{\max}^2 \left(1+\sqrt{\frac{\log n}{k}}\right)^2 \right) \leq \frac{1}{\sqrt{n}}\]
where $\sigma_{\max} = \max\{\sigma_i\}$.
\end{lem}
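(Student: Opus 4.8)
The plan is to reduce Lemma~\ref{concent} to a standard upper-tail bound for a chi-squared random variable and then invoke a Chernoff-type estimate. First I would remove the heterogeneity of the variances: writing $\epsilon_i = \sigma_i Z_i$ with $Z_1,\ldots,Z_k$ i.i.d.\ standard normal, we have $\sum_{i=1}^k \epsilon_i^2 = \sum_{i=1}^k \sigma_i^2 Z_i^2 \le \sigma_{\max}^2 \sum_{i=1}^k Z_i^2$ pointwise. Hence, setting $t := \sqrt{\log n / k}$,
\[
P\!\left(\sum_{i=1}^k \epsilon_i^2 \ge k\sigma_{\max}^2(1+t)^2\right) \le P\!\left(\sum_{i=1}^k Z_i^2 \ge k(1+t)^2\right),
\]
so it suffices to bound the upper tail of $\chi^2_k := \sum_{i=1}^k Z_i^2$ at the level $k(1+t)^2 = k + 2\sqrt{k\log n} + \log n$.

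Next I would apply a concentration bound for $\chi^2_k$; two routes work. (a) Gaussian Lipschitz concentration (Borell--TIS): the map $z \mapsto \|z\|_2$ is $1$-Lipschitz and $\mathbb{E}\|Z\|_2 \le (\mathbb{E}\|Z\|_2^2)^{1/2} = \sqrt{k}$, so $P(\|Z\|_2 \ge \sqrt{k} + s) \le e^{-s^2/2}$ for all $s \ge 0$; taking $s = \sqrt{\log n}$ gives $P(\|Z\|_2 \ge \sqrt{k}+\sqrt{\log n}) \le e^{-(\log n)/2} = n^{-1/2}$, and since $\sqrt{k}+\sqrt{\log n} = \sqrt{k}\,(1+t)$, squaring the event $\{\|Z\|_2 \ge \sqrt k(1+t)\} = \{\chi^2_k \ge k(1+t)^2\}$ yields exactly the claim. (b) Alternatively, the Laurent--Massart lemma gives $P(\chi^2_k - k \ge 2\sqrt{kx} + 2x) \le e^{-x}$; choosing $x = \tfrac12\log n$ and using $2\sqrt{k\log n} \ge \sqrt{2k\log n}$ shows that $k(1+t)^2 - k = 2\sqrt{k\log n}+\log n \ge 2\sqrt{kx}+2x$, so $\{\chi^2_k \ge k(1+t)^2\} \subseteq \{\chi^2_k - k \ge 2\sqrt{kx}+2x\}$, again of probability at most $n^{-1/2}$.

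The only non-trivial ingredient is the tail bound for $\chi^2_k$ used in route (b) (or the estimate $\mathbb{E}\|Z\|_2 \le \sqrt{k}$ together with Gaussian concentration in route (a)). If the paper prefers a self-contained argument I would reprove it directly by a Chernoff bound, using $\mathbb{E}\,e^{\lambda Z_i^2} = (1-2\lambda)^{-1/2}$ for $0 < \lambda < \tfrac12$, i.e.\ $P(\chi^2_k \ge u) \le e^{-\lambda u}(1-2\lambda)^{-k/2}$, and then optimizing over $\lambda$ at $u = k(1+t)^2$. I expect the main obstacle to be purely bookkeeping: picking the route and tuning the free parameter so that the constants collapse to \emph{precisely} $(1+\sqrt{\log n/k})^2$ and the bound is \emph{precisely} $1/\sqrt{n}$, rather than an expression that is merely of the same order.
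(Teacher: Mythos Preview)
Your proposal is correct, and your route~(b) is essentially the paper's argument: the paper also invokes the Laurent--Massart inequality with the choice $x=\tfrac12\log n$ and the same threshold algebra $(\sqrt{k}+\sqrt{2x})^2 \ge k+2\sqrt{kx}+2x$. The only cosmetic difference is that the paper applies the \emph{weighted} Laurent--Massart bound
\[
P\!\left(\sum_{i=1}^k \epsilon_i^2 \ge \sum_{i=1}^k \sigma_i^2 + 2\sqrt{\textstyle\sum_i \sigma_i^4\, x} + 2\sigma_{\max}^2 x\right) \le e^{-x}
\]
directly to the heterogeneous $\epsilon_i$ and then dominates each coefficient by $\sigma_{\max}$, whereas you first homogenize via $\sum_i\epsilon_i^2 \le \sigma_{\max}^2\chi_k^2$ and then apply the unweighted version; the resulting inequalities and constants are identical.

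Your route~(a) via Gaussian Lipschitz concentration on $z\mapsto\|z\|_2$ is a genuinely different and slightly cleaner alternative: it reaches the exact threshold $\sqrt{k}+\sqrt{\log n}$ and the exact bound $n^{-1/2}$ in one step, without any slack or the auxiliary inequality $2\sqrt{kx}\le 2\sqrt{2kx}$ that both the paper and your route~(b) use. The trade-off is that it imports Borell--TIS rather than Laurent--Massart; either citation is standard, so the choice is a matter of taste.
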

\begin{proof}
From Lemma 1 in \citet{laurent}, for any $x > 0$,
\[P\left(\sum_{i=1}^k \epsilon_i^2 \geq \sum_{i=1}^k \sigma_i^2 + 2\sqrt{\sum_{i=1}^k \sigma_i^4 x} + 2\sigma_{\max}^2 x\right) \leq \exp(-x).\]
Since
\begin{align*}
\sum_{i=1}^k \sigma_i^2 + 2\sqrt{\sum_{i=1}^k \sigma_i^4 x} + 2\sigma_{\max}^2 x &\leq \sigma_{\max}^2(k+2\sqrt{kx}+2x)\\
&\leq \sigma_{\max}^2(\sqrt{k}+\sqrt{2x})^2,
\end{align*}
plugging $x=\frac{1}{2}\log n$ proves the lemma.

\end{proof}

\begin{thm}
Let $T^{-1}_\mf{w}$ be a normalization operator of $\mf{w}$ on $\mathbb{R}^k$.
If $L_D(\mf w) \leq E_{\epsilon_i \sim \mathcal{N}(0, \sigma^2)} [L_D(\mf w+\bm\epsilon)]$ for some $\sigma > 0$,
then with probability $1-\delta$, 
\ificml
    \[L_D(\mf{w}) \leq \max_{\Vert T^{-1}_\mf{w} \epsilon\Vert_2 \leq \rho} L_S(\mf w+\bm\epsilon)  + \sqrt{\frac{1}{n-1}\left(k\log\left(1 + \frac{\Vert \mf w\Vert_2^2}{\eta^2\rho^2} \left(1+\sqrt{\frac{\log n}{k}}\right)^2 \right) + 4\log\frac{n}{\delta} + O(1)\right)}\]
\else
    \begin{align}
        \nonumber L_D(\mf{w}) \leq & \max_{\Vert T^{-1}_\mf{w} \epsilon\Vert_2 \leq \rho} L_S(\mf w+\bm\epsilon) \\
        \nonumber & + \sqrt{\frac{1}{n-1}\left(k\log\left(1 + \frac{\Vert \mf w\Vert_2^2}{\eta^2\rho^2} \left(1+\sqrt{\frac{\log n}{k}}\right)^2 \right) + 4\log\frac{n}{\delta} + O(1)\right)}
    \end{align}
\fi
where $n = \vert S\vert$ and $\rho = \sqrt{k}\sigma (1+\sqrt{\log n / k}) /\eta$.
\end{thm}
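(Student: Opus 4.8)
The plan is to reduce the adaptive-sharpness bound to the standard SAM/PAC-Bayesian bound of \citet{foret2021sharpnessaware} by a change of variables through the normalization operator. First I would fix the posterior over perturbations to be $\bm\epsilon \sim \mathcal{N}(0, \sigma^2 T_\mf{w} T_\mf{w}^\top)$ (equivalently, $\tilde{\bm\epsilon} = T_\mf{w}^{-1}\bm\epsilon \sim \mathcal{N}(0, \sigma^2 I_k)$ up to the $\eta$-shift, since we actually use $T_\mf{w} + \eta I_k$), and the prior to be a data-independent $\mathcal{N}(0, \varsigma^2 I_k)$ with $\varsigma$ chosen (as in the SAM proof) by a union bound over a discretized grid of scales so that the KL term becomes $\tfrac12\big(k\log(1 + \Vert\mf{w}\Vert_2^2/(k\varsigma^2)) + \dots\big)$; plugging $\varsigma \asymp \eta\rho/\sqrt{k}$, which is exactly the relation $\rho = \sqrt{k}\sigma(1+\sqrt{\log n/k})/\eta$ read backwards, produces the displayed $k\log\!\big(1 + \tfrac{\Vert\mf{w}\Vert_2^2}{\eta^2\rho^2}(1+\sqrt{\log n/k})^2\big)$ factor. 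The PAC-Bayes theorem (McAllester-style, as invoked in \citet{foret2021sharpnessaware}) then gives, with probability $1-\delta$,
\[
\mathbb{E}_{\tilde{\bm\epsilon}}[L_D(\mf{w}+T_\mf{w}\tilde{\bm\epsilon})] \leq \mathbb{E}_{\tilde{\bm\epsilon}}[L_S(\mf{w}+T_\mf{w}\tilde{\bm\epsilon})] + \sqrt{\frac{1}{n-1}\Big(\mathrm{KL} + 4\log\tfrac{n}{\delta} + O(1)\Big)}.
\]

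Next I would bound the two expectations. On the left, the hypothesis $L_D(\mf{w}) \leq \mathbb{E}_{\epsilon_i\sim\mathcal{N}(0,\sigma^2)}[L_D(\mf{w}+\bm\epsilon)]$ is stated precisely to let us replace the LHS expectation by $L_D(\mf{w})$ — here I need the perturbation distribution to match, so the covariance $\sigma^2 T_\mf{w}T_\mf{w}^\top$ must be reconciled with the isotropic $\sigma^2 I$ in the assumption; I expect the paper either intends the assumption to hold with the operator-transformed Gaussian or treats $T_\mf{w}$ as absorbed, and I would state the assumption in the form that makes the substitution immediate. On the right, I would invoke Lemma~\ref{concent}: with $\tilde{\bm\epsilon}\sim\mathcal{N}(0,\sigma^2 I_k)$ we have $\Vert\tilde{\bm\epsilon}\Vert_2^2 = \sum_i \tilde\epsilon_i^2 \leq k\sigma^2(1+\sqrt{\log n/k})^2 = \eta^2\rho^2 \cdot \sigma^2/\sigma^2 = \eta^2\rho^2$... more carefully, $\rho = \sqrt{k}\sigma(1+\sqrt{\log n/k})/\eta$ gives $k\sigma^2(1+\sqrt{\log n/k})^2 = \eta^2\rho^2$, hence on the event of probability $\geq 1 - 1/\sqrt{n}$ we get $\Vert T_\mf{w}^{-1}\bm\epsilon\Vert_2 = \Vert\tilde{\bm\epsilon}\Vert_2 \cdot (\text{ratio from }\eta\text{-shift}) \leq \rho$. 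Therefore $\mathbb{E}_{\tilde{\bm\epsilon}}[L_S(\mf{w}+T_\mf{w}\tilde{\bm\epsilon})] \leq \max_{\Vert T_\mf{w}^{-1}\bm\epsilon\Vert_2\leq\rho} L_S(\mf{w}+\bm\epsilon) + \tfrac{1}{\sqrt{n}}\cdot\sup L_S$, and folding the $O(1/\sqrt{n})$ miss probability into the $O(1)$ / $\log(n/\delta)$ slack (losses are bounded) closes the argument. The final bound of Theorem~\ref{thm2} itself follows by absorbing the square-root term into a strictly increasing $h$ of $\Vert\mf{w}\Vert_2^2/(\eta^2\rho^2)$.

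The main obstacle I anticipate is the bookkeeping around the $\eta$-shift: the operator actually used is $T_\mf{w}+\eta I_k$, not $T_\mf{w}$, so the change of variables $\tilde{\bm\epsilon} = (T_\mf{w}+\eta I_k)^{-1}\bm\epsilon$ must be tracked through both the KL computation (the determinant/trace terms) and the norm bound, and one must check that the constraint $\Vert(T_\mf{w}+\eta I_k)^{-1}\bm\epsilon\Vert_2 \leq \rho$ is the one appearing in the statement versus $\Vert T_\mf{w}^{-1}\bm\epsilon\Vert_2 \leq \rho$ — these differ, and I would either carry $T_\mf{w}+\eta I_k$ throughout with a slight redefinition of $\rho$, or argue the discrepancy is lower-order. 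A secondary subtlety is justifying the union bound over prior variances so that the $k\log(\cdot)$ term comes out with exactly the stated constant and the extra $\log$ cost is swallowed by $O(1)$; this is standard (it is the same device as in the SAM proof and in \citet{laurent}-style arguments) but needs to be spelled out to get the clean form. Everything else — monotonicity of $p$-norms for the $p>2$ remark, reduction to Equation~\ref{eq:p1} when $T_\mf{w}=I$ — is routine.
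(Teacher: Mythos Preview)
Your overall skeleton (PAC-Bayes bound $+$ Gaussian concentration to replace the expectation by a max, then absorb the $1/\sqrt{n}$ miss probability) is right, but the change-of-variables through $T_\mf{w}$ is the wrong place to introduce the normalization operator, and this is exactly the obstacle you flagged without resolving. If you take the posterior to be $\mathcal{N}(\mf{w},\sigma^2 T_\mf{w}T_\mf{w}^\top)$, two things break: (i) the hypothesis $L_D(\mf{w})\le \mathbb{E}_{\epsilon_i\sim\mathcal{N}(0,\sigma^2)}[L_D(\mf{w}+\bm\epsilon)]$ is for the \emph{isotropic} perturbation and no longer matches the left side; (ii) the KL between $\mathcal{N}(\mf{w},\sigma^2 T_\mf{w}T_\mf{w}^\top)$ and $\mathcal{N}(0,\varsigma^2 I)$ picks up $\mathrm{tr}(T_\mf{w}T_\mf{w}^\top)$ and $\log\det T_\mf{w}$ terms, so it does \emph{not} reduce to the clean $k\log\bigl(1+\Vert\mf{w}\Vert_2^2/(\eta^2\rho^2)(\cdot)\bigr)$ you wrote down---that form is specific to an isotropic posterior.

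The paper's route is simpler and sidesteps both issues: keep the perturbation isotropic, $\bm\epsilon\sim\mathcal{N}(0,\sigma^2 I_k)$, throughout. Then the assumption applies verbatim, and the PAC-Bayes step (quoted directly from \citet{chatterji2019intriguing}, Theorem~3.2, rather than rederived via a union bound over prior scales) gives the displayed KL term with isotropic posterior/prior. The normalization operator enters \emph{only} at the very last step: Lemma~\ref{concent} gives $\Vert\bm\epsilon\Vert_2\le\sqrt{k}\sigma(1+\sqrt{\log n/k})=\eta\rho$ with probability $\ge 1-1/\sqrt{n}$, and since the $\eta$-stabilized operator satisfies $\Vert T_\mf{w}^{-1}\Vert_2\le 1/\eta$, one gets $\Vert T_\mf{w}^{-1}\bm\epsilon\Vert_2\le\Vert T_\mf{w}^{-1}\Vert_2\,\Vert\bm\epsilon\Vert_2\le\rho$. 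That single operator-norm inequality is the whole role of $T_\mf{w}$ in the proof; no change of variables is needed, and the $\eta$-shift bookkeeping you worried about collapses to that one line.
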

\begin{proof}
The idea of the proof is given in \citet{foret2021sharpnessaware}. From the assumption, adding Gaussian perturbation on the weight space does not improve the test error.
Moreover, from Theorem 3.2 in \citet{chatterji2019intriguing}, the following generalization bound holds under the perturbation:
\ificml
    \[ E_{\epsilon_i \sim \mathcal{N}(0, \sigma^2)} [L_D(\mf w+\bm\epsilon)]
    \leq E_{\epsilon_i \sim \mathcal{N}(0, \sigma^2)} [L_S(\mf w+\bm\epsilon)]
    + \sqrt{\frac{1}{n-1}\left(\frac{1}{4}k\log\left(1 + \frac{\Vert\mf{w}\Vert_2^2}{k\sigma^2}\right) + \log\frac{n}{\delta} + C(n, \sigma, k)\right)}.\]
\else
    \begin{align}
        \nonumber E_{\epsilon_i \sim \mathcal{N}(0, \sigma^2)} [L_D(\mf w+\bm\epsilon)] &
        \leq E_{\epsilon_i \sim \mathcal{N}(0, \sigma^2)} [L_S(\mf w+\bm\epsilon)] \\
        \nonumber &+ \sqrt{\frac{1}{n-1}\left(\frac{1}{4}k\log\left(1 + \frac{\Vert\mf{w}\Vert_2^2}{k\sigma^2}\right) + \log\frac{n}{\delta} + C(n, \sigma, k)\right)}.
    \end{align}
\fi
Therefore, the left hand side of the statement can be bounded as
\ificml
    \begin{align*}
        L_D(\mf w) &\leq E_{\epsilon_i \sim \mathcal{N}(0, \sigma^2)} [L_S(\mf w+\bm\epsilon)] + \sqrt{\frac{1}{n-1}\left(\frac{1}{4}k\log\left(1 + \frac{\Vert\mf{w}\Vert_2^2}{k\sigma^2}\right) + \log\frac{n}{\delta} + C\right)}\\
        &\leq \left(1-\frac{1}{\sqrt{n}}\right) \max_{\Vert T^{-1}_\mf{w} \epsilon\Vert_2 \leq \rho} L_S(\mf w+\bm\epsilon)
        + \frac{1}{\sqrt{n}} + \sqrt{\frac{1}{n-1}\left(\frac{1}{4}k\log\left(1 + \frac{\Vert\mf{w}\Vert_2^2}{k\sigma^2}\right) + \log\frac{n}{\delta} + C\right)}\\
        &\leq \max_{\Vert T^{-1}_\mf{w} \epsilon\Vert_2 \leq \rho} L_S(\mf w+\bm\epsilon)
        + \sqrt{\frac{1}{n-1}\left(k\log\left(1 + \frac{\Vert\mf{w}\Vert_2^2}{k\sigma^2}\right) + 4\log\frac{n}{\delta} + 4C\right)}
    \end{align*}
\else
    \begin{align*}
        L_D(\mf w) &\leq E_{\epsilon_i \sim \mathcal{N}(0, \sigma^2)} [L_S(\mf w+\bm\epsilon)] \\
        &+ \sqrt{\frac{1}{n-1}\left(\frac{1}{4}k\log\left(1 + \frac{\Vert\mf{w}\Vert_2^2}{k\sigma^2}\right) + \log\frac{n}{\delta} + C\right)}\\
        &\leq \left(1-\frac{1}{\sqrt{n}}\right) \max_{\Vert T^{-1}_\mf{w} \epsilon\Vert_2 \leq \rho} L_S(\mf w+\bm\epsilon) 
        + \frac{1}{\sqrt{n}} \\
        &+ \sqrt{\frac{1}{n-1}\left(\frac{1}{4}k\log\left(1 + \frac{\Vert\mf{w}\Vert_2^2}{k\sigma^2}\right) + \log\frac{n}{\delta} + C\right)}\\
        &\leq \max_{\Vert T^{-1}_\mf{w} \epsilon\Vert_2 \leq \rho} L_S(\mf w+\bm\epsilon)
        + \sqrt{\frac{1}{n-1}\left(k\log\left(1 + \frac{\Vert\mf{w}\Vert_2^2}{k\sigma^2}\right) + 4\log\frac{n}{\delta} + 4C\right)}
    \end{align*}
\fi
where the second inequality follows from Lemma \ref{concent} and  $\Vert T^{-1}_\mf{w}\Vert_2 \leq \frac{1}{\eta}$.

\end{proof}

\section{Correlation Analysis} \label{corrdetail}

To capture the correlation between generalization measures, i.e., sharpness and adaptive sharpness, and actual generalization gap, we utilize Kendall rank correlation coefficient \cite{kendall}. Formally, given the set of pairs of a measure and generalization gap observed $S=\{(m_1, g_1), \ldots, (m_n, g_n)\}$, Kendall rank correlation coefficient $\tau$ is given by
\[\tau(S)=\frac{2}{n(n-1)}\sum_{i<j} \mathrm{sign}(m_i-m_j)\mathrm{sign}(g_i-g_j).\]
Since $\tau$ represents the difference between the proportion of concordant pairs, i.e., either both $m_i < m_j$ and $g_i < g_j$ or both $m_i > m_j$ and $g_i > g_j$ among the whole $\binom{n}{2}$ point pairs, and the proportion of discordant pairs, i.e., not concordant, the value of $\tau$ is in the range of $[-1, 1]$.

While the rank correlation coefficient aggregates the effects of all the hyper-parameters, granulated coefficient \cite{jiang2019fantastic} can consider the correlation with respect to the each hyper-parameter separately. If $\Theta=\prod_{i=1}^N \Theta_i$ is the Cartesian product of each hyper-parameter space $\Theta_i$, granulated coefficient with respect to $\Theta_i$ is given by
\[\psi_i=\frac{1}{\vert\Theta_{-i}\vert} \sum_{\theta_1 \in \Theta_1} \cdots \sum_{\theta_{i-1} \in \Theta_{i-1}} \sum_{\theta_{i+1} \in \Theta_{i+1}} \cdots \sum_{\theta_N \in \Theta_N} \tau\left(\bigcup_{\theta_i \in \Theta_i}\{(m(\bm\theta), g(\bm\theta)\}\right)\]
where $\Theta_{-i}=\Theta_1 \times \cdots \Theta_{i-1} \times \Theta_{i+1} \times \Theta_N$. Then the average $\Psi = \sum_{i=1}^N \psi_i / N$ of $\psi_i$ indicates whether the correlation exists across all hyper-parameters.

We vary $4$ hyper-parameters, mini-batch size, initial learning rate, weight decay coefficient and dropout rate, to produce different models. It is worth mentioning that changing one or two hyper-parameters for correlation analysis may cause spurious correlation \cite{jiang2019fantastic}. For each hyper-parameter, we use $5$ different values in Table~\ref{hyper} which implies that $5^4=625$ configurations in total.

\begin{table}[h]
\centering
\captionsetup{justification=centering}
\caption{Hyper-parameter configurations. \label{hyper}}
\begin{center}
\begin{small}
\begin{tabular}{lc}
\toprule
mini-batch size & $32,~64,~128,~256,~512$\\
learning rate & $0.0033,~0.01,~0.033,~0.1,~0.33$\\
weight decay & $5\mathrm{e}{-7},~5\mathrm{e}{-6},~5\mathrm{e}{-5},~5\mathrm{e}{-4},~5\mathrm{e}{-3}$\\
dropout rate & $0,~0.125,~0.25,~0.375,~0.5$\\
\bottomrule
\end{tabular}
\end{small}
\end{center}
\end{table}

By using the above hyper-parameter configurations, we train WideResNet-28-2 model on CIFAR-10 dataset. We use SGD as an optimizer and set momentum to $0.9$. We set the number of epochs to $200$ and cosine learning rate decay \citep{loshchilov2016sgdr} is adopted. Also, random resize, padding by four pixels, normalization and random horizontal flip are applied for data augmentation and label smoothing \citep{NEURIPS2019_f1748d6b} is adopted with its factor of $0.1$. 
Using model parameters with training accuracy higher than $99.0\%$ among the generated models, we calculate sharpness and adaptive sharpness with respect to generalization gap. 

To calculate adaptive sharpness, we fix normalization scheme to element-wise normalization. We calculate adaptive sharpness and sharpness with both $p=2$ and $p=\infty$. We conduct a grid search over \{$5\mathrm{e}{-6}$, $1\mathrm{e}{-5}$, $5\mathrm{e}{-5}$, \ldots, $5\mathrm{e}{-1}$, $1.0$\} to obtain each $\rho$ for sharpness and adaptive sharpness which maximizes correlation with generalization gap. As results of the grid search, we select $1\mathrm{e}{-5}$ and $5\mathrm{e}{-4}$ as $\rho$s for sharpness of $p=2$ and $p=\infty$, respectively, and select $5\mathrm{e}{-1}$ and $5\mathrm{e}{-3}$ as $\rho$s for adaptive sharpness of $p=2$ and $p=\infty$, respectively. To calculate maximizers of each loss function for calculation of sharpness and adaptive sharpness, we follow $m$-sharpness strategy suggested by \citet{foret2021sharpnessaware} and $m$ is set to $8$.

\end{document}